\newcommand{\E}{\mathbb{E}}
\newcommand{\e}{\mathrm{e}}
\DeclareMathOperator*{\argmin}{argmin}
\newenvironment{proof}{\paragraph{Proof:}}{\hfill$\square$}
\newtheorem{corollary}{Corollary}
\newtheorem{lemma}{Lemma}
\newtheorem{remark}{Remark}
\newtheorem{theorem}{Theorem}
\newtheorem{definition}{Definition}
\title{Stochastic Optimization with Bandit Sampling}
\author{
	Farnood Salehi  \And  L. Elisa Celis \And Patrick Thiran \AND	
	\textnormal{EPFL, Switzerland}\\
	\texttt{firstname.lastname@epfl.ch} \AND
}
\begin{document}
\setlength{\abovedisplayskip}{2pt}
\setlength{\belowdisplayskip}{2pt}

\maketitle

\vspace{-.25in}

\begin{abstract}
\vspace{-.1in}
    Many stochastic optimization algorithms work by estimating the gradient of the cost function on the fly by sampling datapoints uniformly at random from a training set.
However, the estimator might have a large variance, which inadvertantly slows down the convergence rate of the algorithms. 
One way to reduce this variance is to  sample the datapoints from a carefully selected non-uniform distribution. 
%
In this work, we propose a novel  non-uniform sampling approach that uses the multi-armed bandit framework. 
Theoretically, we show that our algorithm asymptotically approximates the optimal variance within a factor of 3.
Empirically, we show that using this datapoint-selection technique results in a significant reduction of the convergence time and variance of several stochastic optimization algorithms such as SGD and SAGA.
This approach for sampling datapoints is general, and can be used in conjunction with \emph{any} algorithm that uses an unbiased gradient estimation -- we expect it to have broad applicability beyond the specific examples explored in this work. 
\end{abstract}

\vspace{-.1in}
\section{Introduction} \label{sec:Introduction}
\vspace{-.1in}
Consider the following optimization problem that is ubiquitous in machine learning:
\begin{equation} \label{eq:main}
\min_{w\in\mathbb{R}^d} F(w) := \frac{1}{n}\sum_{i=1}^{n}\phi_i(w)+\lambda r(w),
\end{equation}
where the coordinates $w\in \mathbb{R}^d$ are the learning parameters.  The first term in \eqref{eq:main} (which we refer to as  $f(w)$) is the mean of $n$ convex functions 
$\phi_i(\cdot): \mathbb{R}^d \rightarrow \mathbb{R}$, called \emph{sub-cost functions}, while the second is the product of a convex regularizer $r(\cdot)$ and a regularization parameter $\lambda$. 
The $i^{th}$ sub-cost function $\phi_{i}(\cdot)$ is parameterized by the $i^{th}$ \emph{datapoint} $(x_i,y_i)$, where $x_i\in \mathbb{R}^d$ denotes its feature vector and $y_i \in \mathbb{R}$ its label. 
Examples of common sub-cost functions include
\begin{itemize}[leftmargin=15pt,noitemsep,topsep=0pt,parsep=0pt,partopsep=0pt]
	\item $L_1$-penalized logistic regression: $\phi_i(w) = \log (1+\exp(-y_i\langle x_i,w\rangle))$ and $r(w) = \|w\|_1$,
	\item $L_2$-penalized SVM: $\phi_i(w) =  ([1-y_i\langle x_i,w\rangle)]_+)^2$ and $r(w) =\frac{1}{2} \|w\|^2_2$ (where $[\cdot]_+ = \max\{0,\cdot\}$ is the hinge loss),
	\item Ridge regression:  $\phi_i(w) = \frac{1}{2}(\langle x_i,w\rangle - y_i)^2$ and $r(w) = \frac{1}{2}\|w\|_2^2$.
\end{itemize}

Gradient descent and its variants form classic and often very effective methods for solving \eqref{eq:main}. 
However, if we minimize $F(w)$ using gradient descent, 
each iteration needs $n$ gradient calculations (at iteration $t$, the value $\nabla\phi_i(w^t)$ must be computed for all $1\leq i \leq n$) which, for large $n$, can be prohibitively expensive \cite{B2010}. 
Stochastic gradient descent (SGD) reduces the computational complexity of an iteration by sampling a datapoint $i_t \in 1, \ldots, n$ uniformly at random at each time step $t$ and computing the gradient only at this datapoint;  
$\nabla \phi_{i_t}(w^t)$ is then an unbiased estimator for $\nabla f(w^t)$.
However, this estimator may have a large variance, which negatively affects the convergence rate of the underlying optimization algorithm and requires an increased number of iterations. 
For two classes of stochastic optimization algorithms, SGD and proximal SGD (PSGD), reducing this variance improves  the speed of convergence to the optimal coordinate $w^\star$ \cite{ZZ2014} (see also Section~\ref{sec:priminilary}).

This has motivated the development of several techniques to reduce this variance by using previous information to refine the estimation for the gradient; e.g., by occasionally calculating and using the full gradient to refine the estimation \cite{AY2016,XZ2014}, or using the previous calculations of $\phi_{i}$ (at the most recent selection of each datapoint $i$) \cite{DBL2014}. 
Yet another technique, closely related to this work, is to sample $i_t$ from a non-uniform distribution $p^t=[p^t_1,\cdots,p^t_n]$ (see \cite{KG2016,ZZ2014,zz2015,ZZ2014_2,ZKM2017,SBAD2015,CR2016}), where the probability $p^t_i$ of sampling datapoint $i$ at time $t$ is proportional to
 $\|\nabla \phi_i(w^t)\|$\footnote{In this work, we denote the Euclidian norm $\|\cdot\|_2$ by $\|\cdot\|$.}.
For example, in SGD with non-uniform sampling according to $p^t$, the update rule is
\begin{equation} \label{eq:SGD_update}
w^{t+1} = w^t - \gamma_t \left(\hat{g}(w^t) +\lambda \nabla r(w^t) \right),
\end{equation}
where $\gamma_t$ is the step size and $\hat{g}(w^t)$ is the unbiased estimator for 
$\nabla f(w^t)$ at time $t$ defined by
\begin{equation} \label{eq:SGD_gradient}
\hat{g}(w^t) \triangleq \frac{\nabla \phi_{i_t}(w^t)}{np^t_{i_t}}.
\end{equation}
  Taking expectation over $p^t$, the \emph{pseudo-variance}
  \footnote{
  Note that $\hat{g}(w^t)$ is a $d$-dimensional random vector, with $d>1$ in general, hence strictly speaking \eqref{eq:pseudo-variance} is the sum of the variances of its $d$ entries. Although \eqref{eq:pseudo-variance} is simply called varaince in  \cite{ZZ2014}, we use the term pseudo-variance for \eqref{eq:pseudo-variance} to distinguish it from the term variance.
} 
  of $\hat{g}(w^t)$ is defined to be
  \begin{equation} \label{eq:pseudo-variance}
  \begin{aligned}
  \mathbb{V}^t\left(w^t,p^t\right) & \triangleq \E\left[\left\|\hat{g}(w^t) - \nabla f(w^t)\right\|^2\right] = \E\left[\left\|\frac{1}{np^t_{i_t}}\nabla \phi_{i_t}(w^t) - \frac{1}{n}\sum_{i=1}^{n}\nabla \phi_i(w^t)\right\|^2\right]. 
  \end{aligned}
  \end{equation}
  Expanding \eqref{eq:pseudo-variance}, one can write $\mathbb{V}^t\left(w^t,p^t\right)$ as the difference of two terms. The first is a function of $p^t$, which we refer to as the \emph{effective variance}
  	\begin{equation} \label{eq:effective-variance}
  \mathbb{V}_e^t\left(w^t,p^t\right) \triangleq \frac{1}{n^2} \sum_{i=1}^{n}\frac{1}{p_i^t}\left\|\nabla \phi_i(w^t)\right\|^2,
  \end{equation}
while the second does not depend on $p^t$, and we denote it by 
$  \mathbb{V}_c^t(w^t) \triangleq \nicefrac{\left\|\sum_{i=1}^{n}\nabla \phi_i(w^t)\right\|^2}{n^2}$.
As the only term in one's control is $p^t$, it suffices to minimize $\mathbb{V}_e^t\left(w^t,p^t\right)$:  
 The minimum of \eqref{eq:effective-variance} and thus \eqref{eq:pseudo-variance} is attained when $p^t_i = \nicefrac{\|\nabla \phi_i(w^t)\|}{\left(\sum_{j=1}^{n}\|\nabla \phi_j(w^t)\|\right)}$.
If the $\nabla \phi_i(w^t)$s have similar magnitudes for all $1\leq i \leq n$, then $p^t$ is close to the uniform distribution. However, if the magnitude of $\nabla \phi_{i}(w^t)$ at some datapoint $i$ is comparatively very large, then the optimal distribution is far from uniform; in this case, the optimal effective variance can be roughly $n$ times smaller than the effective variance using the uniform distribution.
  \emph{How do we find the optimal probabilities $p^t$, given that the gradients $\nabla \phi_{i}(w^t)$ are unknown?}
 In \cite{KG2016,ZZ2014,zz2015} the question is approached by minimizing an upper bound on $\mathbb{V}_e^t\left(w^t,p^t\right)$, which results in a time-invariant distribution $p^t = p$ for all $t$. This method is known as \emph{importance sampling} (IS).
 However, a drawback of this method is that the upper-bound on \eqref{eq:effective-variance} may be loose and hence far from the optimal distribution. Moreover, this requires the computation of an upper-bound on $\|\nabla \phi_i(w^t)\|$ for all $1\leq i \leq n$, which can be computationally expensive.

\vspace{-.1in}
 \subsection*{Our Contributions}
\vspace{-.1in}
In this work, inspired by active learning methods, we use an adaptive approach to define $p^t$ instead of fixing it in advance. 
 If the set of datapoints selected during the first $\ell$ iterations is $\{i_t\}_{1\leq t \leq \ell}$, then we refer to the corresponding gradients $\{\nabla \phi_{i_t}(w^t)\}_{1\leq t \leq \ell}$ as \emph{feedback}, and use it define $p^{\ell+1}$.
 The problem of how to best define the distribution given the feedback falls under the framework of multi-armed bandit problems.
 We call our approach \emph{multi-armed bandit sampling} (MABS) and  show that this approach gives a  distribution that is asymptotically close to optimal. 
\vspace{-.1in}
\begin{theorem}[Informal Statement of Theorem~\ref{thm:main}]
	Let $ p^\star = \mathop{\argmin}_{p} \sum_{t = 1}^{T} \mathbb{V}_e^t\left(w^t,p\right)$ be the (a-priori unknown) distribution that optimizes the effective variance after $T$ iterations. 
	Let $p^t$  be the distributions selected by MABS. 
	When the gradients are bounded, MABS approximates the optimal solution $\sum_{t = 1}^{T} \mathbb{V}_e^t\left(w^t,p^\star\right)$ asymptotically up to a factor 3, i.e.,
	\begin{equation}
	\lim\limits_{T \to \infty} \frac{1}{T}\left( \sum_{t = 1}^{T} \left(\mathbb{V}_e^t\left(w^t,p^t\right) - 3\mathbb{V}_e^t(w^t,p^\star) \right) \right) \leq 0. 
	\end{equation}
	\end{theorem}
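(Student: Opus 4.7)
The plan is to cast the choice of sampling distributions as an online bandit problem: at each round $t$, the player picks $p^t$ in the probability simplex, pays the loss $\mathbb{V}_e^t(w^t,p^t)=\tfrac{1}{n^2}\sum_i \|\nabla\phi_i(w^t)\|^2/p_i^t$, and only observes $\|\nabla\phi_{i_t}(w^t)\|$ for the single coordinate $i_t$ drawn from $p^t$. Writing $g_i^t=\|\nabla\phi_i(w^t)\|^2/n^2$, I would first construct a one-sample importance-weighted estimator (of the form $g_{i_t}^t\Indic\{i=i_t\}/p_{i_t}^t$) so that an exponential-weights / EXP3-style update on the induced losses is implementable from bandit feedback alone.

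Second, I would mix the EXP3 distribution with a uniform exploration floor $p_i^t\geq \gamma/n$, where $\gamma=\gamma(T)\to 0$, so that the importance weight $1/p_i^t$ stays bounded by $n/\gamma$. A standard EXP3 analysis then yields, in expectation and against any fixed competitor $p$ in the simplex,
\begin{equation*}
\sum_{t=1}^{T}\mathbb{V}_e^t(w^t,p^t)\;\leq\; c(\gamma)\sum_{t=1}^{T}\mathbb{V}_e^t(w^t,p^\star)\;+\;R(T,\gamma),
\end{equation*}
where the multiplicative constant $c(\gamma)$ absorbs the $(1-\gamma)^{-1}$ slack from the uniform mix-in together with a Cauchy--Schwarz / AM-GM step used to control the non-linear $1/p_i$ loss by the linear surrogate that EXP3 actually handles, and $R(T,\gamma)=O(\sqrt{T\log n}/\gamma)$ is the sub-linear bandit regret.

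Third, dividing by $T$ and letting $\gamma=\gamma(T)$ vanish slowly (e.g., $\gamma(T)=T^{-1/4}$) makes $R(T,\gamma)/T\to 0$ while simultaneously driving $c(\gamma)$ to its limiting value $3$. The constant $3$, rather than $1$, is what the reduction naturally loses; it is produced by compounding (i) the unit slack already present in EXP3-style linear regret, (ii) a factor-$2$ overhead when the convex loss $g_i^t/p_i$ is controlled by a linear proxy (recall the Cauchy--Schwarz identity $(\sum_i\sqrt{g_i^t})^2\leq\sum_i g_i^t/p_i$, tight only when $p=p^\star$), and (iii) the $(1-\gamma)^{-1}\to 1$ mix-in term, which all together collapse to the advertised $3$.

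The main obstacle is the bias-variance tradeoff in the one-sample estimator: its conditional variance scales like $1/p_{i_t}^t$, which is what forces the exploration floor in the first place. Balancing $\gamma$ small enough that $c(\gamma)\to 3$ against $\gamma$ large enough that $R(T,\gamma)/T\to 0$ is the delicate step, and it exploits the bounded-gradient hypothesis throughout to keep $\mathbb{V}_e^t$ uniformly bounded along the trajectory $\{w^t\}$. A martingale-difference concentration argument (Azuma--Hoeffding on the bounded sequence $\mathbb{V}_e^t(w^t,p^t)-\mathbb{E}[\mathbb{V}_e^t(w^t,p^t)\mid\mathcal{F}_{t-1}]$) then lifts the expected-regret bound to the almost-sure $\limsup$ statement of the theorem.
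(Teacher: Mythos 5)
Your high-level architecture (EXP3 with an importance-weighted estimator, a uniform exploration floor, a regret bound, then divide by $T$) is the same as the paper's, but two essential pieces are wrong or missing. First, the mechanism you give for the factor $3$ is backwards. In the paper the exploration parameter is \emph{fixed} at $\eta=0.4$ for all $T$, and the constant $3$ is exactly $\tfrac{1-\eta}{1-2\eta}$ evaluated at that fixed value (see \eqref{eq:upper_bound}); it does not arise as a limit of any $c(\gamma)$ as $\gamma\to 0$. If you let the floor $\gamma(T)\to 0$ as you propose, the natural multiplicative constant tends to $1$, not $3$, while the additive regret blows up much faster than your stated $O(\sqrt{T\log n}/\gamma)$: the relevant rewards are $r_i^t=a_i^t/(p_i^t)^2\le n^2 a_i^t/\gamma^2$ and their importance-weighted second moments scale like $\gamma^{-5}$, so the trade-off you describe is not the one that actually has to be balanced. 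The ``compounding'' of a unit slack, a factor $2$, and a $(1-\gamma)^{-1}\to 1$ term into $3$ does not correspond to any computation; in particular the inequality $(\sum_i\sqrt{a_i^t})^2\le\sum_i a_i^t/p_i$ is just the statement that $p^\star$ minimizes \eqref{eq:Bandit} for a single $t$, not a factor-$2$ overhead.

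Second, the key reduction from a \emph{linear} bandit regret guarantee to a \emph{multiplicative} guarantee on the convex loss $\sum_i a_i^t/p_i$ is missing. Your estimator importance-weights the raw values $g_i^t=a_i^t$, which would make EXP3 compete on the linear functional $\sum_i p_i a_i^t$ --- the wrong objective. The paper instead feeds EXP3 the gradient magnitudes $r_i^t=a_i^t/(p_i^t)^2=-\nabla_i\mathbb{V}_e^t(p^t)$ (hence the exponent $3$ on $1/p_i^t$ in the weight update), and the conversion is done by Lemma~\ref{lem:bound}: convexity of $\mathbb{V}_e^t$ combined with the degree-$(-1)$ homogeneity identity $\langle p,\nabla\mathbb{V}_e^t(p)\rangle=-\mathbb{V}_e^t(p)$ turns the regret bound $\sum_t\langle p^t-p^\star,r^t\rangle+\eta\sum_t\langle p^\star,r^t\rangle$ into $(1-2\eta)\sum_t\mathbb{V}_e^t(p^t)-(1-\eta)\sum_t\mathbb{V}_e^t(p^\star)$, which is where the ratio $\tfrac{1-\eta}{1-2\eta}=3$ comes from. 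A generic Cauchy--Schwarz/AM--GM step does not supply this. (Your closing Azuma--Hoeffding step is also unnecessary relative to the paper, whose bound \eqref{eq:performance1} is obtained in expectation over the sampling, but that is a minor point compared with the two gaps above.)
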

	%
%
We emphasize that MABS can be used in conjunction with \textit{any} algorithm that uses an unbiased gradient estimation to reduce the variance of estimation, not just SGD. This includes SAGA, SVRG, Prox\_SGD, S2GD, and Quasi\_Newton methods. We present the empirical performance of some of these methods in the paper (see Figures \ref{fig:SGD_main1}, \ref{fig:main} and \ref{fig:gamma}).

In summary, our main contributions are:
\begin{itemize}[leftmargin=15pt,noitemsep,topsep=0pt,parsep=0pt,partopsep=0pt]
	\item Recasting the problem of reducing the variance of stochastic optimization  as a multi-armed bandit problem as above, 
	\item Providing a sampling algorithm (MABS) and an analysis of its rate of convergence the optimal distribution (Section~\ref{sec:bandit}).
	\item Illustrating the convergence rates of stochastic optimization algorithms, such as SGD, when combined with MABS (Section~\ref{sec:priminilary}).
	\item Exhibiting the significant improvements in practice yielded by selecting datapoints using MABS for stochastic optimization algorithms on both synthetic and real-world data (Section~\ref{sec:experiment}).
\end{itemize}

\section{Multi-Armed Bandit Sampling} \label{sec:bandit}

The end goal of MABS is to find the sampling distribution $p^t$ that minimizes the effective variance $\mathbb{V}_e^t\left(w^t,p^t\right)$, and thus the pseudo-variance $\mathbb{V}^t\left(w^t,p^t\right)$. 
 In SGD and other stochastic optimization algorithms that use $\hat{g}(w^t) = \nicefrac{\nabla \phi_{i_t}(w^t)}{np^t_{i_t}}$ as an unbiased estimator for $\nabla f(w^t)$, the effective variance $\mathbb{V}_e^t\left(w^t,p^t\right) = \mathbb{E}[\|\hat{g}(w^t)\|^2]$. However, we consider a broader class of stochastic optimization algorithms, for which
  \begin{equation} \label{eq:pseudo-variance_general}
\begin{aligned}
\mathbb{V}^t\left(w^t,p^t\right) & \triangleq \E\left[\left\|\hat{g}(w^t) - \nabla f(w^t)\right\|^2\right] = \mathbb{V}^t_e(w^t,p^t) - \mathbb{V}^t_c(w^t),
\end{aligned}
\end{equation}
 where $\mathbb{V}^t_c(w^t)$ does not depend on $p^t$, and where the effective variance has the form, dropping this explicit dependence on $w^t$,
 \begin{equation} \label{eq:Bandit}
 \mathbb{V}_e^t\left(p^t\right) = \sum_{i=1}^{n}\frac{a_i^t}{p_i^t}, 
\end{equation}
where $a^t_i$ is a function of the coordinate $w^t$ and of the estimator (for the gradient) that is used. 
For example, for SGD $\hat{g}(w^t)$ is given by \eqref{eq:SGD_gradient} and $a_i^t = \nicefrac{\|\nabla \phi_{i}(w^t)\|^2}{n^2}$.
Let $p^\star$ denote the invariant distribution at which  $\sum_{t=1}^{T} \mathbb{V}_e^t\left(p^t\right)$ reaches its minimum, i.e., 
\begin{equation} \label{eq:optimal_p}
p^\star = \argmin_{p}\sum_{t=1}^{T} \mathbb{V}_e^t\left(p\right). 
\end{equation}
The goal is to find an approximate solution of $\sum_{t=1}^{T} \mathbb{V}_e^t\left(p^\star\right)$, i.e., a distribution $p^t$ for all $1\leq t \leq T$ such that $\sum_{t=1}^{T} \mathbb{V}_e^t\left(p^t\right) \leq c \sum_{t=1}^{T} \mathbb{V}_e^t\left(p^\star\right)$  for some $c>1$.
We first present a technical result that motivates the use of MAB in this setting.
\begin{lemma} \label{lem:bound}
For any real value constant $\zeta \leq 1$ and sampling distributions $p^1$ and $p^2$ we have
		\begin{equation}\label{eq:transform1}
	(1-2\zeta) \mathbb{V}_e^t\left(p^1\right) - (1-\zeta) \mathbb{V}_e^t\left(p^2\right) \leq \langle p^1 - p^2, \nabla \mathbb{V}_e^t\left(p^1\right) \rangle + \zeta \langle p^2,\nabla \mathbb{V}_e^t\left(p^1\right) \rangle.
	\end{equation}
\end{lemma}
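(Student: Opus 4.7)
The plan is to reduce the inequality to a coordinatewise AM-GM bound after two algebraic simplifications that exploit the special form $\mathbb{V}_e^t(p)=\sum_i a_i^t/p_i$. The key first observation is that this function is homogeneous of degree $-1$ in $p$, so by Euler's identity (or a direct computation using $\partial_i \mathbb{V}_e^t(p)=-a_i^t/p_i^2$) one has $\langle p,\nabla\mathbb{V}_e^t(p)\rangle=-\mathbb{V}_e^t(p)$. This is the only structural property of $\mathbb{V}_e^t$ I will need beyond its coordinatewise form.

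Using this identity, I would rewrite the right-hand side of \eqref{eq:transform1} as $\langle p^1,\nabla\mathbb{V}_e^t(p^1)\rangle+(\zeta-1)\langle p^2,\nabla\mathbb{V}_e^t(p^1)\rangle = -\mathbb{V}_e^t(p^1)-(1-\zeta)\langle p^2,\nabla\mathbb{V}_e^t(p^1)\rangle$, and regroup the $\mathbb{V}_e^t(p^1)$ terms on the two sides of the target inequality. At the boundary $\zeta=1$ what remains is the trivial equality $-\mathbb{V}_e^t(p^1)=-\mathbb{V}_e^t(p^1)$, so that case is handled separately. For $\zeta<1$ the factor $1-\zeta$ is strictly positive and can be divided out, so the original claim reduces to the cleaner and $\zeta$-free inequality $2\mathbb{V}_e^t(p^1)\le \mathbb{V}_e^t(p^2)-\langle p^2,\nabla\mathbb{V}_e^t(p^1)\rangle$.

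Finally, substituting the explicit formula for $\nabla\mathbb{V}_e^t(p^1)$ turns this into $2\sum_i a_i^t/p_i^1 \le \sum_i a_i^t\bigl(1/p_i^2+p_i^2/(p_i^1)^2\bigr)$, which I would verify term by term: for each $i$ the AM-GM inequality applied to the positive reals $1/p_i^2$ and $p_i^2/(p_i^1)^2$ gives $1/p_i^2+p_i^2/(p_i^1)^2\ge 2/p_i^1$, and multiplying by the nonnegative weight $a_i^t$ (nonnegativity holding in all instantiations considered, e.g., $a_i^t=\|\nabla\phi_i(w^t)\|^2/n^2$ for SGD) and summing over $i$ yields the reduced inequality.

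I do not anticipate a genuine obstacle; the only care points are (i) the sign of $1-\zeta$, which is exactly where the hypothesis $\zeta\le 1$ is used and which forces the short separate treatment of $\zeta=1$, and (ii) an implicit positivity assumption on the entries of $p^1$ and $p^2$ so that $\nabla\mathbb{V}_e^t$ and the coordinatewise AM-GM step are well defined — a condition consistent with the interpretation of $p^t$ as a sampling distribution in MABS. Should one wish to avoid Euler's identity, the entire argument can equivalently be done by direct coordinate expansion, which still collapses to the same AM-GM inequality on $1/p_i^2$ and $p_i^2/(p_i^1)^2$.
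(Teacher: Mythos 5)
Your proof is correct and, despite the different packaging, is essentially the paper's argument: the paper combines the first-order convexity inequality $\mathbb{V}_e^t(p^2)\ge\mathbb{V}_e^t(p^1)+\langle\nabla\mathbb{V}_e^t(p^1),p^2-p^1\rangle$ with the same identity $\langle p,\nabla\mathbb{V}_e^t(p)\rangle=-\mathbb{V}_e^t(p)$ and then multiplies through by $1-\zeta\ge 0$, which is precisely your reduced inequality $2\mathbb{V}_e^t(p^1)\le\mathbb{V}_e^t(p^2)-\langle p^2,\nabla\mathbb{V}_e^t(p^1)\rangle$ rescaled (and, by multiplying rather than dividing by $1-\zeta$, avoids your separate $\zeta=1$ case). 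Your coordinatewise AM-GM step is simply a hands-on verification of that convexity inequality for $\sum_i a_i^t/p_i$ with $a_i^t\ge 0$, so the two proofs coincide in substance.
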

The proof is in Appendix~\ref{sec:appendix}. It is based on the convexity property of $\mathbb{V}_e^t\left(p\right)$ with respect to $p$. 
Let $p^1=p^t$ and $p^2=p^\star$ in \eqref{eq:transform1}, then
\begin{equation}\label{eq:transform}
(1-2\zeta) \mathbb{V}_e^t\left(p^t\right) - (1-\zeta) \mathbb{V}_e^t\left(p^\star\right) \leq \langle p^t - p^\star, \nabla \mathbb{V}_e^t\left(p^t\right) \rangle + \zeta \langle p^\star,\nabla \mathbb{V}_e^t\left(p^t\right) \rangle.
\end{equation}

A MAB has $n$ arms (which are the $n$ datapoints in our setting). Selecting arm $i$ at time $t$ gives a negative reward (loss) $r^t_i$ and losses vary among arms. 
 At time $t$, a MAB algorithm updates the arm sampling distribution $p^t$ based on the loss $r^t_i$ of the arm $i$ that is selected at time $t$, but has no access to the losses $r_j^t$ of other arms $j\neq i$.
In our setting, we update the sampling distribution $p^t$ based on the $a^t_i$ computed from sampled gradient $\nabla \phi_{i}(w^t)$.
The probability of selecting an arm $i$ at time $t$ is $p^t_i$. Let $p^\star$ be the optimal distribution that minimizes the cumulated loss over $T$ rounds. Then
$C^t = \langle p^t-p^\star, r^t \rangle$ is the cost function at time $t$ that one wants to minimize. 
Now, observe that the first term in the right-hand side of \eqref{eq:transform} is the cost function $C^t$ of an adversarial MAB,
 where $r^t_i = \nabla_i \mathbb{V}_e^t(p^t) = -\nicefrac{a^t_i}{(p^t_i)^2}$ , where $p^t$ is the arm/datapoint distribution at time $t$, and $p^\star$ is the optimal sampling distribution given by \eqref{eq:optimal_p}.
 
 Building on this analogy between MAB and datapoint sampling, we propose the MABS algorithm, based on EXP3 \cite{ACFS2002}. The MABS algorithm has $n$ weights $\{\text{w}_i^t\}_{1\leq i\leq n}$, each initialized to 1. 
The sum of weights is called potential function $W^t = \sum_{j=1}^{n}\text{w}^{t}_j$.
The distribution $p^t$ is a weighted average between the distribution $\{\nicefrac{\text{w}^{t}_i}{W^t}\}_{1\leq i\leq n}$ at time $t$ and the uniform distribution $\{\nicefrac{1}{n}\}_{1\leq i \leq n}$, i.e., $p_i^t \propto (1-\eta)\text{w}^{t}_i+\eta$. The parameter $\eta$ determines how much $p^t_i$ deviates from the uniform distribution.  MABS updates the weight of selected datapoint $i_t$ at time $t$, according to the updating rule $\text{w}^{t+1}_{i_t} = \text{w}^t_{i_t}  \exp(\delta
\nicefrac{a_{i_t}^t}{(p^t_{i_t})^3)}$ and keeps the others fixed, i.e., $\text{w}_i^{t+1}=\text{w}_i^t$ for all $i \neq i_t$.
\begin{algorithm}[t]
	\caption{MABS}
	\label{alg:Bandit}
	\begin{algorithmic}[1]
		\STATE \textbf{initialize: }  $\eta = 0.4$ and $\delta=\sqrt{\nicefrac{\eta^4\ln n}{(Tn^5\overline{(a^2)})}}$
		\STATE \textbf{initialize: }  $p_i^1 = \nicefrac{1}{n}$, $\text{w}_i^1=1$, \hspace{1mm} for all $1\leq i \leq n$
		
		\For{$t=1:T$}{
			sample $i\sim p^t$ \\
			$\text{w}^{t+1}_i = \text{w}^t_i \cdot \exp(\frac{\delta a_i^t}{(p^t_i)^3})$ \\
			$\text{w}^{t+1}_j = \text{w}^t_j$, \hspace{2.8cm} for all $ j \neq i$\\
			$W^{t+1} = \sum_{j=1}^{n}\text{w}^{t+1}_j$ \\
			$p^{t+1}_j \leftarrow (1-\eta)\frac{\text{w}^{t+1}_j}{W^{t+1}}+\frac{\eta}{n}$, \hspace{0.65cm} for all $1\leq j \leq n$ \\
		}
	\end{algorithmic}
\end{algorithm}

\begin{remark}
	A  difference between the variance-reduction problem and MAB is that
	 in MAB the rewards are assumed to be upper bounded almost surely. However, here the rewards might be unbounded, depending on the distribution $p^t$.  This occurs if the probability $p^t_i$ is close to 0, so making the term $\nicefrac{a^t_i}{(p^t_i)^2}$ very large. 
\end{remark} 
\begin{theorem}\label{thm:main}
 Using MABS with $\eta = 0.4$ and $\delta=\sqrt{\nicefrac{\eta^4\ln n}{(Tn^5\overline{(a^2)})}}$ to minimize \eqref{eq:Bandit} with respect to $\{p^t\}_{1\leq t \leq T}$, we have
	\begin{equation}\label{eq:performance1}
	\sum_{t=1}^{T} \mathbb{V}_e^t(p^t)  \leq 3 \sum_{t=1}^{T} \mathbb{V}_e^t(p^\star) +
	50 \sqrt{n^5T\overline{(a^2)} \ln n} ,
	\end{equation}
	where $T \geq 25n\ln n \cdot \nicefrac{\max_i(a_i)^2}{(4 \overline{(a^2)})} $, for some $a_i \geq \sup_t \{a^t_i\}$, and where $\overline{(a^2)} = \sum_{i=1}^{n}\nicefrac{a_i^2}{n}$. The complexity of MABS is $O(T\log n)$.
\end{theorem}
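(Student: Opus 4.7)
The plan is to combine the convexity bound of Lemma~\ref{lem:bound} with the standard EXP3-style potential-function argument, with the key modification needed because the ``rewards'' $r_i^t := a_i^t/(p_i^t)^2 = -\nabla_i \mathbb{V}_e^t(p^t)$ can be unbounded (cf.\ the Remark). First I would simplify Lemma~\ref{lem:bound} in this context. Using the identity $\langle p^t, r^t\rangle = \mathbb{V}_e^t(p^t)$ and plugging in $p^1 = p^t$, $p^2 = p^\star$, the $\zeta$ terms on both sides cancel and one obtains the clean convexity inequality
\[
2\,\mathbb{V}_e^t(p^t) \leq \mathbb{V}_e^t(p^\star) + \langle p^\star, r^t\rangle.
\]
Summing over $t$ reduces the theorem to controlling $\sum_t \langle p^\star, r^t\rangle$, which is exactly the adversarial-MAB regret of MABS against the unknown comparator $p^\star$.

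Next I would run the standard EXP3 potential-function analysis on $W^t = \sum_j \text{w}_j^t$. For the upper bound on $\log(W^{T+1}/W^1)$ I would use $\exp(x)\leq 1 + x + x^2$ and $\log(1+y)\leq y$; the prerequisite $\delta a_{i_t}^t/(p_{i_t}^t)^3 \leq 1$ is guaranteed by mixing with the uniform distribution ($p_i^t\geq\eta/n$), by the prescribed $\delta$, and by the hypothesis $T\geq 25 n\ln n \cdot \max_i(a_i)^2/(4\overline{(a^2)})$. Combined with $\text{w}_i^t/W^t \leq p_i^t/(1-\eta)$ and $p_i^t\geq\eta/n$ (to control $\sum_i(a_i^t)^2/(p_i^t)^4 \leq n^5\overline{(a^2)}/\eta^4$), this yields
\[
\E[\log(W^{T+1}/W^1)] \leq \tfrac{\delta}{1-\eta}\sum_t \mathbb{V}_e^t(p^t) + \tfrac{\delta^2 n^5\overline{(a^2)}\,T}{\eta^4(1-\eta)}.
\]
For the matching lower bound I would use the convexity trick $\log W^{T+1} \geq \sum_k p^\star_k \log \text{w}_k^{T+1}$, expand $\log \text{w}_k^{T+1} = \delta\sum_{t:i_t=k} a_k^t/(p_k^t)^3$, and take expectation over the sampling to get $\E[\log(W^{T+1}/W^1)] \geq \delta\sum_t \langle p^\star, r^t\rangle - \ln n$.

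Comparing the two bounds and substituting back into the Step~1 inequality, the left-hand coefficient of $\sum_t \mathbb{V}_e^t(p^t)$ becomes $2 - 1/(1-\eta)$, which equals $1/3$ precisely at $\eta = 0.4$, producing the stated factor of $3$ in front of $\sum_t \mathbb{V}_e^t(p^\star)$. Plugging in $\delta = \sqrt{\eta^4\ln n/(Tn^5\overline{(a^2)})}$ balances the residual $\ln n/\delta$ and $\delta n^5\overline{(a^2)}T/(\eta^4(1-\eta))$ terms into $\Theta(\sqrt{n^5 T\overline{(a^2)}\ln n})$, and direct numerical substitution at $\eta=0.4$ yields the constant $50$. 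The $O(T\log n)$ complexity follows because each iteration samples once from $p^t$ and updates only one weight, maintainable with a Fenwick/segment tree. The main obstacle I expect is exactly the unboundedness of $r^t$: the uniform-mixing parameter $\eta$ is what keeps the exponents in the weight update controllable, but because it cannot be made arbitrarily small without blowing up the $n^5/\eta^4$ term, a multiplicative factor strictly greater than $1$ (here optimized to $3$ at $\eta=0.4$) is unavoidable in this analysis.
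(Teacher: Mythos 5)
Your proposal is correct and follows essentially the same route as the paper's proof: an EXP3-style potential argument bounding $\ln(W^T/W^1)$ from above and below (with the same $e^x\le 1+x+x^2$ step, the same $\delta\hat r^t_i\le 1$ condition giving the constraint on $T$, and the same bound $\sum_i (r^t_i)^2\le n^5\overline{(a^2)}/\eta^4$ from the uniform mixing), combined with Lemma~\ref{lem:bound}. Your observation that the $\zeta$-dependence cancels once one uses $\langle p^t,\nabla \mathbb{V}_e^t(p^t)\rangle=-\mathbb{V}_e^t(p^t)$, reducing the combination step to $2\,\mathbb{V}_e^t(p^t)\le \mathbb{V}_e^t(p^\star)+\langle p^\star,r^t\rangle$, is a clean algebraic reorganization of the identical argument, and your constants ($2-1/(1-\eta)=1/3$ at $\eta=0.4$, hence the factor $3$ and $50=(3+5)/\eta^2$) match the paper's.
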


The proof is given in Appendix~\ref{sec:appendix}. To show that MABS minimizes asymptotically the effective variance $\mathbb{V}^t_e$ in Theorem~\ref{thm:main}, we adapt the approach of multiplicative-weight update algorithms (see for example \cite{ACFS2002}), using the results of Lemma~\ref{lem:bound}:
We upper bound and lower bound the potential function $W^T$ at iteration $T$, and then use Lemma~\ref{lem:bound} to upper-bound  $\sum_{t=1}^{T} \mathbb{V}_e^t(p^t) $.

Although the second term of the right-hand side of \eqref{eq:performance1} increases as $n^{5/2}$, the effective variance $\mathbb{V}_e^t(p^t)$ scales as $\sqrt{n}$ because $a_i$ decreases as $\nicefrac{1}{n^2}$, hence $\mathbb{V}_e^t(p^t)$ increases only as $\sqrt n$.
 Note that to run MABS, only an upper bound on $\sum_{i=1}^{n}\nicefrac{a_i^2}{n}$
 is  needed, hence  we do not need to compute $\sup_t \{a^t_i\}$ exactly, whereas in IS the exact value of $\sup_t \{a^t_i\}$ is required. The computation of the gradient $\nabla \phi_{i}(w^t)$ requires $O(d)$ computations, so that the computational overhead of MABS is insignificant only if  $\log n $ is small compared to the coordinate dimension $d$.
 Such is the case for the two datasets in Table~\ref{table:stat} used in the evaluation section (Section~\ref{sec:experiment}). The condition on $T$ might be prohibitive if $n$ is large. However, we can relax this condition at the expense of having a slightly worse bound (see Appendix~\ref{sec:appendix}).

\begin{remark}
	If we know $a_i=\sup_t \{a^t_i\}$, then we can refine MABS and improve the bound \eqref{eq:performance1}.
	The idea is that,
	instead of mixing the distribution $\{\nicefrac{\text{w}_i^t}{W^t}\}_{1\leq i \leq n}$ with a uniform distribution, we mix $\{\nicefrac{\text{w}_i^t}{W^t}\}_{1\leq i \leq n}$ with a non-uniform distribution $\{ \nicefrac{a_i^{2/5}}{(\sum_{j=1}^{n}a_j^{2/5})}\}_{1\leq i \leq n}$, i.e.,  $p^t_i \propto (1-\eta)\text{w}^{t}_i+\eta a_i^{2/5}$ instead of $(1-\eta)\text{w}^{t}_i+\eta$ at the last line of MABS. This way we can improve the worst-case guarantee on $\nicefrac{a^t_i}{p^t_i}$, because the lower bound on $p^t_i$ is larger for a datapoint $i$ whose $a^t_i$ is large (see Appendix~\ref{sec:MABS_IS} for this variant of algorithm).
\end{remark}

\vspace{-.1in}
\section{Combining MABS with Stochastic Optimization Algorithms} 
\vspace{-.1in}
\label{sec:priminilary} \label{sec:applications}
In this section, we restate the known convergence guarantees for SGD and PSGD in order to highlight the impact the effective variance has on them.
As the upper-bounds on the convergence guarantees depend on the effective variance $\mathbb{V}^t_e(p^t)$, by using the sampling distribution $p^t$ given by MABS, the effective variance $\mathbb{V}^t_e(p^t)$ is reduced, which results in improved convergence guarantees.
Recall that these algorithms use an unbiased estimator for the gradient $\hat{g}(w^t) = \nicefrac{\nabla \phi_{i_t}(w^t)}{np_{i_t}}$ for $f(w)$, hence $\mathbb{V}^t_e(p^t) = \E[\|\hat{g}(w^t)\|^2]$. 

\vspace{-.05in}
\subsection{SGD}
\vspace{-.05in}
For SGD, the known convergence rate can be restated in terms of the effective variance as follows.
\begin{theorem} [Theorem 1.17 in \cite{V2015}]\label{thm:SGD_strong}
	Assume that $F(w)$ is $\mu$-strongly convex. Then, if $\gamma_t = 2/\mu t$ in \eqref{eq:SGD_update}, the following inequality holds 
	for any $T \geq 1$ in SGD:
	\begin{equation} \label{eq:convergence_SGD}
	\mathbb{E}\left[F\left(\frac{2}{T(T+1)}\sum_{t=1}^{T}t\cdot w^{t}\right)\right]  - F(w^\star) \leq \frac{2}{\mu T(T+1)}\sum_{t=1}^{T} \E[\mathbb{V}^t_e(p^t)].
	\end{equation}
	The expectation is over $w^t$.
\end{theorem}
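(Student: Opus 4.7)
The plan is to follow the classical SGD-under-strong-convexity argument, with the step size $\gamma_t = 2/(\mu t)$ chosen so that a weighted combination of iterates telescopes cleanly. First I would expand $\|w^{t+1} - w^\star\|^2$ using the update \eqref{eq:SGD_update} and take conditional expectation with respect to $i_t$ given $w^t$. Unbiasedness of $\hat{g}(w^t) + \lambda \nabla r(w^t)$ as an estimator of $\nabla F(w^t)$ turns the inner-product cross term into $-2\gamma_t \langle \nabla F(w^t), w^t - w^\star\rangle$, while the second-moment term becomes $\gamma_t^2\, \E[\mathbb{V}_e^t(p^t)]$ after the identification $\mathbb{V}_e^t(p^t) = \E[\|\hat{g}(w^t)\|^2]$ stipulated in the excerpt.

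Next I would invoke $\mu$-strong convexity of $F$ in the form $\langle \nabla F(w^t), w^t - w^\star\rangle \geq F(w^t) - F(w^\star) + (\mu/2)\|w^t - w^\star\|^2$. Rearranging, taking full expectation, and substituting $\gamma_t = 2/(\mu t)$ (so that $1 - \mu\gamma_t = (t-2)/t$ and $\gamma_t^2 = 4/(\mu^2 t^2)$), then multiplying both sides by $t^2$, I obtain
\[
\tfrac{4t}{\mu}\, \E[F(w^t)-F(w^\star)] \leq t(t-2)\, \E[\|w^t-w^\star\|^2] - t^2\, \E[\|w^{t+1}-w^\star\|^2] + \tfrac{4}{\mu^2}\, \E[\mathbb{V}_e^t(p^t)].
\]
The key algebraic observation is $t(t-2) \leq (t-1)^2$, which turns the squared-distance terms into a telescoping chain when summed over $t = 1,\ldots,T$.

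Summing the inequality and using $t(t-2) \leq (t-1)^2$, the distance contributions collapse to $0\cdot \E[\|w^1-w^\star\|^2] - T^2\, \E[\|w^{T+1}-w^\star\|^2] \leq 0$, yielding $\sum_{t=1}^T t\, \E[F(w^t)-F(w^\star)] \leq (1/\mu)\sum_{t=1}^T \E[\mathbb{V}_e^t(p^t)]$. With $\sum_{t=1}^T t = T(T+1)/2$, Jensen's inequality applied to the convex $F$ then gives
\[
\E\!\left[F\!\left(\tfrac{2}{T(T+1)}\sum_{t=1}^T t\, w^t\right)\right] - F(w^\star) \leq \tfrac{2}{T(T+1)}\sum_{t=1}^T t\, \E[F(w^t) - F(w^\star)],
\]
and chaining the two bounds delivers the theorem. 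The main obstacle is spotting the right multiplier (the factor $t^2$, rather than the more natural $t$) so that the telescope-enabling inequality $t(t-2) \leq (t-1)^2$ can be applied; everything else is routine rearrangement. A secondary bookkeeping issue is cleanly folding the deterministic regularizer contribution $\lambda \nabla r(w^t)$ into the second moment in a way consistent with the paper's definition $\mathbb{V}_e^t(p^t) = \E[\|\hat g(w^t)\|^2]$.
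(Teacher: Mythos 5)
The paper does not prove this theorem; it is imported verbatim from \cite{V2015} (Theorem 1.17), so there is no in-paper argument to compare against. Your reconstruction is the standard one and the core of it is correct: expanding $\|w^{t+1}-w^\star\|^2$, using unbiasedness and $\mu$-strong convexity, multiplying by $t^2$ so that $1-\mu\gamma_t = (t-2)/t$ produces the coefficient $t(t-2)\le (t-1)^2$, telescoping, and finishing with Jensen over the weights $2t/(T(T+1))$ reproduces \eqref{eq:convergence_SGD} exactly, including the correct handling of the $t=1$ term (where $t(t-2)=-1\le 0$).

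The one point you flag but do not resolve is in fact the only place the argument can fail as stated: with the update \eqref{eq:SGD_update} the second-moment term is $\E\bigl[\|\hat g(w^t)+\lambda\nabla r(w^t)\|^2\bigr]$, which equals $\mathbb{V}_e^t(p^t)+2\lambda\langle\nabla f(w^t),\nabla r(w^t)\rangle+\lambda^2\|\nabla r(w^t)\|^2$ and is not $\E[\|\hat g(w^t)\|^2]=\mathbb{V}_e^t(p^t)$ unless $\lambda=0$ (or the extra deterministic terms are absorbed into a redefined variance). So your proof establishes the theorem cleanly for $F=f$; for $\lambda>0$ the inequality with only $\mathbb{V}_e^t$ on the right-hand side does not follow from these steps, and moreover $r(w)=\|w\|_1$ is not differentiable, so $\nabla r$ must be read as a subgradient. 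This is a looseness in the paper's restatement of the cited result rather than a flaw in your strategy, but a complete writeup should either restrict to $\lambda=0$ or carry the regularizer terms explicitly.
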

The convergence bound \eqref{eq:convergence_SGD} holds for any $p^t$ including the one given by MABS. Next, we consider 
 SGD in conjunction with with MABS and want to restate \eqref{eq:convergence_SGD} by plugging the upper-bound \eqref{eq:performance1} in it. 
 \vspace{-.1in}
 \begin{corollary}
 	Assume that $F(w)$ is $\mu$-strongly convex. Then, if $\gamma_t = 2/\mu t$ in \eqref{eq:SGD_update}, the following inequality holds 
 	for any $T \geq 25n\ln n \cdot \nicefrac{\max_i(a_i)^2}{(4 \overline{(a^2)})} $ in SGD with MABS:
 	\begin{equation} \label{eq:convergence_SGD_MABS}
 	\mathbb{E}\left[F\left(\frac{2}{T(T+1)}\sum_{t=1}^{T}t\cdot w^{t}\right)\right]  - F(w^\star) \leq \frac{2}{\mu T(T+1)} \left(3\E[\mathbb{V}^\star(p^\star)] + 50\sqrt{T\sum_{i=1}^{n}a_i^2\ln n} \right),
 	\end{equation}
 	for some $a_i \geq \sup_w \|\nabla \phi_{i}(w)\|^2$,
 	$p^\star$ is given by \eqref{eq:optimal_p}, where $\mathbb{V}^\star(p^\star) = \sum_{t=1}^{T}\mathbb{V}^t_e(p^\star
 	)$ is the optimal pseudo-variance cumulated over $T$ iterations.
 \end{corollary}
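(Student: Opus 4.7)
The plan is to chain Theorem~\ref{thm:SGD_strong} with Theorem~\ref{thm:main}: the former gives the SGD convergence rate in terms of the expected effective variance $\E[\mathbb{V}_e^t(p^t)]$ at the realized sampling distributions $p^t$, and the latter controls those effective variances against the optimum $p^\star$. First I would write
$$
\E\left[F\left(\tfrac{2}{T(T+1)}\sum_{t=1}^T t\, w^t\right)\right] - F(w^\star) \leq \tfrac{2}{\mu T(T+1)}\sum_{t=1}^T \E[\mathbb{V}_e^t(p^t)]
$$
from Theorem~\ref{thm:SGD_strong}, then apply Theorem~\ref{thm:main} to bound $\sum_{t=1}^T \mathbb{V}_e^t(p^t) \le 3\sum_{t=1}^T \mathbb{V}_e^t(p^\star) + 50\sqrt{n^5 T \overline{(a^2)}\ln n}$, and finally take expectation over the $w^t$ trajectory.

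The only real work is a bookkeeping step that translates the abstract bandit constants of Theorem~\ref{thm:main} into the gradient-norm bounds used in the corollary. In the SGD specialization the coefficient $a_i^t$ of the bandit formulation equals $\|\nabla \phi_i(w^t)\|^2/n^2$, so if $a_i \ge \sup_w \|\nabla \phi_i(w)\|^2$ then $a_i/n^2$ is an almost-sure upper bound on $\sup_t a_i^t$. Substituting $a_i/n^2$ for the bandit-level constant in Theorem~\ref{thm:main} gives $n^5 \overline{((a/n^2)^2)} = \sum_{i=1}^n a_i^2$, so the regret term becomes $50\sqrt{T\sum_i a_i^2 \ln n}$, exactly the additive term in \eqref{eq:convergence_SGD_MABS}; the same rescaling sends the lower bound on $T$ in Theorem~\ref{thm:main} to $25n\ln n\cdot \max_i a_i^2/(4\overline{(a^2)})$ with $\overline{(a^2)} = \sum_i a_i^2/n$, matching the hypothesis of the corollary.

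The one subtle point I expect is the interaction between the two sources of randomness. Theorem~\ref{thm:main} is proved against a sequence $(a_i^t)_t$ that is treated as fixed, but here that sequence is random and couples with the bandit picks through $w^t$. Since the hypothesis $a_i \ge \sup_w \|\nabla \phi_i(w)\|^2$ enforces $a_i^t \le a_i/n^2$ uniformly on every sample path, the regret bound of Theorem~\ref{thm:main} holds pathwise in the iterates (it is already an expectation over the internal EXP3 randomness), so it survives the outer expectation. Identifying $\sum_t \mathbb{V}_e^t(p^\star) = \mathbb{V}^\star(p^\star)$ and combining the two displays then yields \eqref{eq:convergence_SGD_MABS}.
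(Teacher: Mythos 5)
Your proposal is correct and matches the paper's (essentially unstated) argument: the corollary is obtained exactly by plugging the bound \eqref{eq:performance1} of Theorem~\ref{thm:main} into the convergence guarantee \eqref{eq:convergence_SGD} of Theorem~\ref{thm:SGD_strong}, with the SGD specialization $a_i^t = \|\nabla\phi_i(w^t)\|^2/n^2$ turning $50\sqrt{n^5T\overline{(a^2)}\ln n}$ into $50\sqrt{T\sum_i a_i^2\ln n}$ and leaving the threshold on $T$ invariant. Your explicit bookkeeping of the $1/n^2$ rescaling and your remark on why the regret bound survives the expectation over the $w^t$ trajectory are both correct and, if anything, more careful than the paper itself.
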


Note that in $3\E[\mathbb{V}^\star(p^\star)]+50\sqrt{T\sum_{i=1}^{n}G_i^4\ln n}$,
the first term increases as $T$ and the second term increases as $\sqrt{T}$, which means that asymptotically $\sqrt{T\sum_{i=1}^{n}G_i^4\ln n}/\E[\mathbb{V}^\star(p^\star)] \to 0$ as $T\to \infty$. Meaning that $\E[\mathbb{V}^\star(p^\star)]$ is dominant in convergence guaranty \eqref{eq:convergence_SGD_MABS} for large $T$.
 In SGD with uniform sampling $p^t_i = p_i = \nicefrac{1}{n}$, however the effective variance $\sum_{t=1}^{T} \E[\mathbb{V}^t_e(p)]$ can be much larger than $\E[\mathbb{V}^\star(p^\star)]$, hence the convergence guaranty \eqref{eq:convergence_SGD} can be very poor comparing to \eqref{eq:convergence_SGD_MABS}.

When the effective variance $\mathbb{V}^t_e(p^t)$ is small (meaning that \eqref{eq:SGD_gradient} is a good estimator), we expect a more stable algorithm, i.e., we can choose larger step size $\gamma_t$ without diverging.
Assume that $F(w) = f(w) = \sum_{i=1}^{n} \phi_i(w)/n$, i.e., there is no reguralizer $\lambda=0$ in \eqref{eq:main} and it is  $L$-smooth. 
Using the smoothness property in \eqref{eq:smooth} where $h(\cdot)=F(\cdot)$, $y=w^{t+1}$ and $x = w^{t}$, we get 
\begin{equation} \label{eq:L_upper}
F(w^{t+1}) - F(w^{t}) \leq \langle \nabla F(w^t), w^{t+1}-w^t \rangle + L \|w^{t+1}-w^t\|^2,
\end{equation}
plugging the update rule \eqref{eq:SGD_update} of SGD 
\begin{equation}
\begin{aligned}
F(w^{t+1}) - F(w^{t})\leq -\gamma \langle \nabla F(w^t),\frac{\nabla \phi_{i_t}(w^t)}{np^t_{i_t}} \rangle +\gamma^2L \|\frac{\nabla \phi_{i_t}(w^t)}{np^t_{i_t}}\|^2 .
\end{aligned}
\end{equation}
Taking expectations, conditionally to $w^t$, we get $\E[F(w^{t+1})|w^t] - F(w^{t}) \leq -\gamma \| \nabla F(w^t)\|^2 + \gamma^2 L \sum_{i=1}^{n}\mathbb{V}^t_e(p^t)$.
To guarantee that the cost function decreases (in expectation), we need to have
$\gamma  \leq \| \nabla F(w^t)\|^2/\left(L \cdot\mathbb{V}^t_e(p^t)\right)$.
Therefore, by decreasing $\mathbb{V}^t_e(p^t)$, we can afford a larger step size $\gamma$. 
We test the stability of the SGD with MABS for a range of $\gamma$ in Section~\ref{sec:exp_robust} and show its significant stability compared to the SGD.

\vspace{-.05in}
\subsection{PSGD}
\vspace{-.05in}
For PSGD, let the function $f(w)$ be $\mu$-strongly convex and $L$-smooth with respect to $\psi$, a continuously differentiable function, and let $\mathcal{B}_\psi(w_1,w_2)$ be the Bregman divergence associated with the function $\psi$ (see Appendix~\ref{sec:definition} for a summary of these standard definitions).
PSGD updates $w$, according to
\begin{equation} \label{eq:PSGD_update}
w^{t+1} = \arg \min_w \left[ \langle \nabla \phi_{i_t}(w^t),w\rangle+\lambda r(w) +\frac{1}{\gamma_t} \mathcal{B}_\psi (w,w^t)
\right].
\end{equation}
Intuitively, this method works by minimizing the first-order approximation of the function $\phi_{i_t}$ plus the regularizer $\lambda r(w)$.
In the non-uniform version of this algorithm, $\nabla \phi_{i_t}(w^t)$ is replaced by $\nabla \phi_{i_t}(w^t)/(np^t_{i_t})$.
\begin{theorem}[Theorem 1 in \cite{ZZ2014}]\label{thm:zhang1}
	Assume that $\psi(\cdot)$ is $\sigma$-strongly convex, that $f(w)$ is $\mu$-strongly convex and $L$-smooth with respect to $\psi$, and that $r(w)$ is convex. Then, if 
	$\gamma_t=1/\left(\alpha + \mu t\right)$ in \eqref{eq:PSGD_update} with $\alpha \geq L-\mu$, the following inequality holds 
	for any $T \geq 1$ in PSGD:
	\begin{equation} \label{eq:PSGD_convergence}
	\frac{1}{T} \sum_{t=1}^{T} \mathbb{E}[F(w^{t+1})]  - F(w^\star) \leq \frac{\alpha}{T} \mathcal{B}_\psi(w^\star,w^1) + \frac{1}{T}\sum_{t=1}^{T} \frac{\E[\mathbb{V}^t_e(p^t)]}{\sigma(\alpha+\mu t)}.
	\end{equation}
\end{theorem}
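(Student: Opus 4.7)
The plan is to emulate the standard analysis of proximal stochastic mirror descent, exploiting the three-point identity for Bregman divergences together with the smoothness / strong convexity assumptions, and then telescope with the carefully chosen step size $\gamma_t = 1/(\alpha+\mu t)$. First, I would write the first-order optimality condition for the minimization defining $w^{t+1}$ in \eqref{eq:PSGD_update}: for every $w$,
\[
\langle \hat g(w^t) + \lambda \nabla r(w^{t+1}), w^{t+1} - w\rangle \;\leq\; \frac{1}{\gamma_t}\bigl(\mathcal{B}_\psi(w,w^t) - \mathcal{B}_\psi(w,w^{t+1}) - \mathcal{B}_\psi(w^{t+1},w^t)\bigr),
\]
which follows from the three-point Bregman identity. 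I would then set $w=w^\star$ and invoke convexity of $r$, $\lambda(r(w^{t+1})-r(w^\star)) \leq \lambda\langle \nabla r(w^{t+1}), w^{t+1}-w^\star\rangle$, to replace the regularizer term by its actual difference.

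Next I would split the stochastic inner product as
\[
\langle \hat g(w^t), w^{t+1} - w^\star\rangle = \langle \nabla f(w^t), w^t - w^\star\rangle + \langle \nabla f(w^t), w^{t+1} - w^t\rangle + \langle \hat g(w^t) - \nabla f(w^t), w^{t+1} - w^t\rangle,
\]
and handle the three pieces separately. The first piece, after taking conditional expectation, becomes a deterministic quantity on which $\mu$-strong convexity of $f$ with respect to $\psi$ gives $\langle \nabla f(w^t), w^t-w^\star\rangle \geq f(w^t)-f(w^\star) + \mu \mathcal{B}_\psi(w^\star,w^t)$. The second piece is controlled via $L$-smoothness of $f$ with respect to $\psi$: $\langle \nabla f(w^t), w^{t+1}-w^t\rangle \geq f(w^{t+1}) - f(w^t) - L\mathcal{B}_\psi(w^{t+1}, w^t)$. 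The third piece — which is the crux — is the martingale noise and must be killed using $\sigma$-strong convexity of $\psi$. By Fenchel–Young,
\[
\langle \hat g(w^t)-\nabla f(w^t), w^{t+1}-w^t\rangle \geq -\frac{\gamma_t}{2\sigma}\|\hat g(w^t)-\nabla f(w^t)\|^2 - \frac{\sigma}{2\gamma_t}\|w^{t+1}-w^t\|^2,
\]
and the last term is absorbed into $-\frac{1}{\gamma_t}\mathcal{B}_\psi(w^{t+1},w^t)$ coming from the optimality condition, using $\mathcal{B}_\psi(w^{t+1},w^t)\geq \tfrac{\sigma}{2}\|w^{t+1}-w^t\|^2$. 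Taking expectations, the noise square converts into $\E[\mathbb{V}^t_e(p^t)]$ (since $\mathbb{V}^t_e(p^t) = \E\|\hat g(w^t)\|^2$ dominates the centered variance, as noted after \eqref{eq:pseudo-variance_general}).

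Combining everything and rearranging yields a per-iteration inequality of the shape
\[
\E[F(w^{t+1})] - F(w^\star) \;\leq\; \Bigl(\tfrac{1}{\gamma_t}-\mu\Bigr)\E[\mathcal{B}_\psi(w^\star,w^t)] - \tfrac{1}{\gamma_t}\E[\mathcal{B}_\psi(w^\star,w^{t+1})] + \tfrac{\E[\mathbb{V}^t_e(p^t)]}{2\sigma/\gamma_t\cdot\gamma_t}\cdot(\cdot),
\]
where the coefficient of $\mathcal{B}_\psi(w^{t+1},w^t)$ stays nonpositive thanks to the condition $\alpha\geq L-\mu$. Plugging in $\gamma_t = 1/(\alpha+\mu t)$, one has $\tfrac{1}{\gamma_t}-\mu = \alpha+\mu(t-1) = \tfrac{1}{\gamma_{t-1}}$, so that the Bregman terms telescope when summed from $t=1$ to $T$. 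Dividing by $T$ and using $\mathcal{B}_\psi(w^\star,w^{T+1})\geq 0$ produces exactly \eqref{eq:PSGD_convergence}.

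The main obstacle is bookkeeping: one has to choose the Young constant in the noise bound so that the $\|w^{t+1}-w^t\|^2$ term is fully absorbed by the negative $-\tfrac{1}{\gamma_t}\mathcal{B}_\psi(w^{t+1},w^t)$, and the residual coefficient in front of the Bregman divergence term involving $L-\mu$ must be nonpositive — this is precisely what dictates the condition $\alpha\geq L-\mu$ on the step size. Getting all these constants aligned so that the telescoping is clean, and then verifying that the variance term ends up divided by exactly $\sigma(\alpha+\mu t)$, is the only subtle part; the rest is routine manipulation of standard Bregman-divergence inequalities.
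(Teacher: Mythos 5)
First, note that the paper itself offers no proof of this statement: it is imported verbatim as Theorem~1 of \cite{ZZ2014}, so the only ``in-paper proof'' is the citation. Your sketch follows the standard proximal stochastic mirror-descent analysis, which is surely the route of the cited reference: prox optimality plus the three-point identity, convexity of $r$, a drift/smoothness/noise split, strong convexity and smoothness of $f$ relative to $\psi$, and telescoping via $\tfrac{1}{\gamma_t}-\mu=\tfrac{1}{\gamma_{t-1}}$. One small repair: your displayed decomposition of $\langle \hat g(w^t), w^{t+1}-w^\star\rangle$ is not an algebraic identity --- it omits the term $\langle \hat g(w^t)-\nabla f(w^t),\, w^t-w^\star\rangle$ --- and only becomes valid after taking conditional expectation given $w^t$, where that omitted term vanishes; this matters because the analogous inner product against $w^{t+1}-w^t$ does \emph{not} vanish in conditional expectation ($w^{t+1}$ depends on the noise), which is exactly why it must be handled by Young's inequality.

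The substantive gap is the one you yourself defer as ``the only subtle part'': the Young-constant bookkeeping does not close as written. The prox step supplies a single budget of $-\tfrac{1}{\gamma_t}\mathcal{B}_\psi(w^{t+1},w^t)$, but you spend $+L\,\mathcal{B}_\psi(w^{t+1},w^t)$ on the smoothness step and then choose the Young parameter so that the noise costs $\tfrac{\sigma}{2\gamma_t}\|w^{t+1}-w^t\|^2\le \tfrac{1}{\gamma_t}\mathcal{B}_\psi(w^{t+1},w^t)$, which already exhausts the entire budget and leaves $+L\,\mathcal{B}_\psi(w^{t+1},w^t)$ unabsorbed. Closing the argument with your split forces a Young parameter $\beta_t$ with $\tfrac{\beta_t}{\sigma}\le \tfrac{1}{\gamma_t}-L=\alpha+\mu t-L$ (this is where $\alpha\ge L-\mu$ enters), at the price of a noise contribution $\tfrac{1}{2\beta_t}\,\E\big[\|\hat g(w^t)-\nabla f(w^t)\|^2\big]$; with $\alpha=L-\mu$ and $t=1$ that denominator degenerates to $0$, so this allocation cannot by itself yield the clean coefficient $\tfrac{1}{\sigma(\alpha+\mu t)}$ multiplying $\E[\mathbb{V}^t_e(p^t)]$ in \eqref{eq:PSGD_convergence}. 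Arriving at exactly that constant requires a different allocation of the Bregman budget (for instance, bounding the full $\langle \hat g(w^t), w^{t+1}-w^t\rangle$ and $\langle \nabla f(w^t), w^{t+1}-w^t\rangle$ terms by separate Young steps and using $\|\nabla f(w^t)\|^2\le\E\big[\|\hat g(w^t)\|^2\big]$ so that both pieces are covered by $\mathbb{V}^t_e(p^t)=\E[\|\hat g(w^t)\|^2]$ rather than by the centered variance). That verification is precisely what your proposal asserts rather than carries out, so as it stands the proof is a correct skeleton with the decisive constant-matching step missing.
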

Similar to SGD, \eqref{eq:PSGD_convergence} holds for any distribution $p^t$, hence we expect that by using MABS and having a small effective variance in \eqref{eq:PSGD_convergence}, PSGD can have a better convergence guarantee.
The study of convergence guarantee \eqref{eq:PSGD_convergence} when PSGD is used with MABS and its stability is left for future work.

\vspace{-.1in}
\section{Empirical Results} \label{sec:experiment} 
\vspace{-.1in}
We evaluate the performance of MABS in conjunction with several stochastic optimization algorithms and address the question: \emph{How much can our bandit-based sampling help?} 
Towards this, we compare the performance of several stochastic optimization algorithms that use MABS as compared with their UNIFORM or IMPORTANCE SAMPLING (IS) versions.
To do so, one must first define the appropriate unbiased estimator $\hat{g}(w^t)$ for $\nabla f(w^t)$ and $a_i^t$ (see \eqref{eq:Bandit}) for each algorithm. 
In particular, we compare the following algorithms and here present the necessary definitions for MABS: 
\begin{itemize}[leftmargin=15pt,noitemsep,topsep=0pt,parsep=0pt,partopsep=0pt]
\item \textbf{Stochastic Gradient Descent (SGD)}:

 $\hat{g}(w^t) = \nicefrac{\nabla \phi_{i_t}(w^t)}{(np^t_{i_t})} $ and $a_i^t = \nicefrac{\|\nabla \phi_{i}(w^t)\|^2}{n^2}$.
\item \textbf{Proximal Stochastic Variance-Reduced Gradient (Prox\_SVRG)}: 

$\hat{g}(w^t) = \nicefrac{\left(\nabla \phi_{i_t}(w^t) - \nabla \phi_{i_t}(\tilde{w})\right)}{(np^t_{i_t}) }+\nicefrac{\sum_{i=1}^{n}\nabla\phi_{i}(\tilde{w})}{n}$ and $a_i^t = \nicefrac{\|\nabla \phi_{i}(w^t)-\nabla \phi_{i}(\tilde{w})\|^2}{n^2}$, where $\tilde{w}$ is defined as follows. Time is divided into bins of size $n$, and $\tilde{w}$ is updated at the beginning of each bin. In the $c^{th}$ bin (i.e., if $cn\leq t < (c+1)n$), then $\tilde{w} = \nicefrac{\sum_{(c-1)n}^{cn} w^t}{n}$ (see \cite{XZ2014} for more details and \cite{AY2016,KG2016} for an improved version of the algorithm).

\item \textbf{SAGA}: $\hat{g}(w^t) = \nicefrac{\left(\nabla \phi_{i_t}(w^t) - \nabla \phi_{i_t}(\tilde{w}_{i_t})\right)}{(np^t_{i_t})} +\nicefrac{\sum_{i=1}^{n}\nabla\phi_{i}(\tilde{w}_i)}{n}$ and $a_i^t = \nicefrac{\|\nabla \phi_{i}(w^t)-\nabla \phi_{i}(\tilde{w}_i)\|^2}{n^2}$, where $\nabla\phi_{i}(\tilde{w}_i)$ is the gradient of sub-cost function $\phi_i$ at last time that datapoint $i$ was chosen (see \cite{DBL2014} for more details).

\end{itemize}
 For each stochastic optimization algorithm, we use three sampling methods: (1) uniform sampling (denoted by suffix \_U), (2) IS (denoted by suffix \_IS), and (3) MABS (denoted by suffix \_MABS).

\begin{figure}[t] 
	\centering
	\subcaptionbox{The cost $F$ for different $\tau$.
	\label{fig:delta_F}}[.4\linewidth][c]{%
		\includegraphics[width=.4\linewidth]{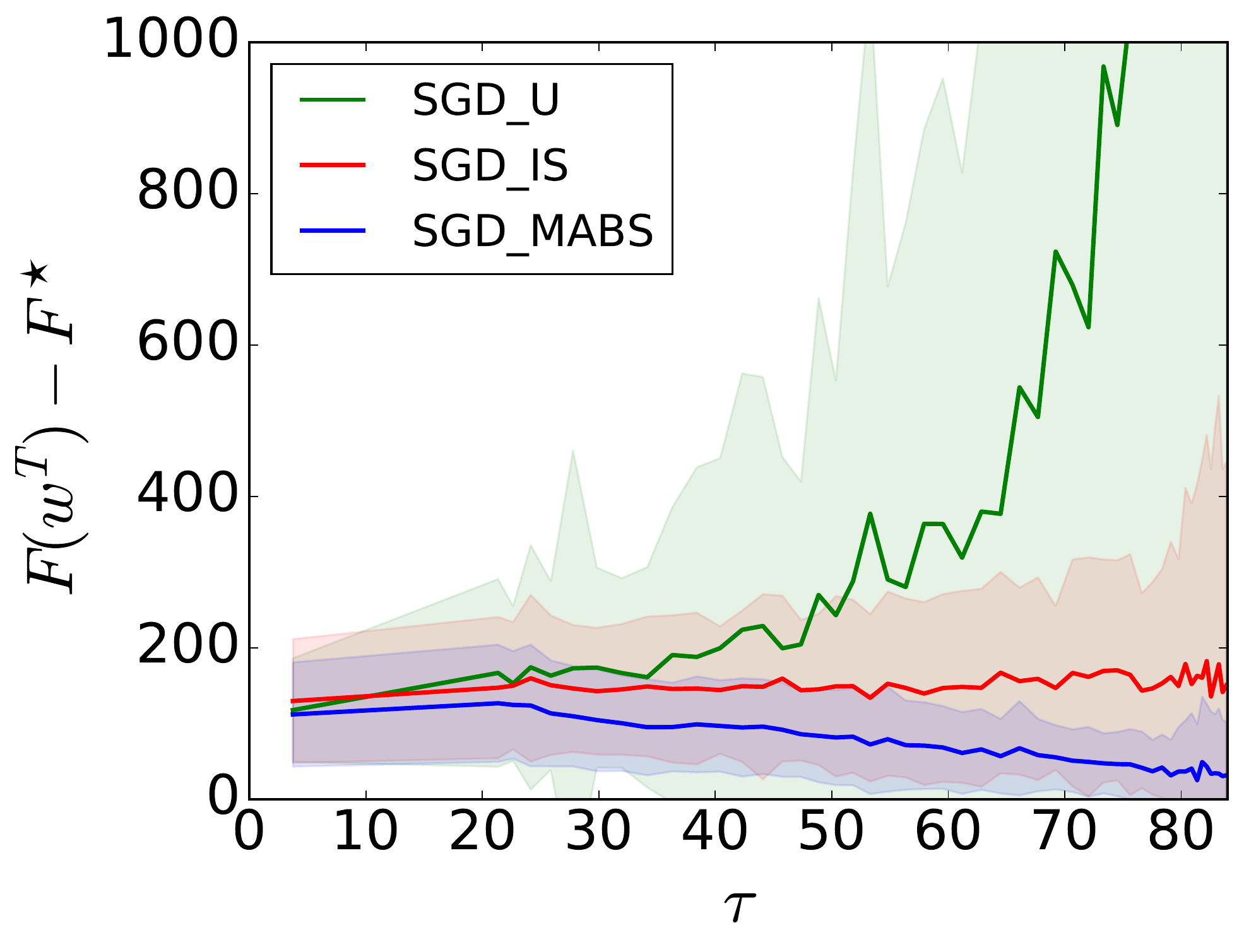}}
	\subcaptionbox{The effective variance $\mathbb{V}_e^T$ for different $\tau$.\label{fig:variance_tau}}[.4\linewidth][c]{%
		\includegraphics[width=.4\linewidth]{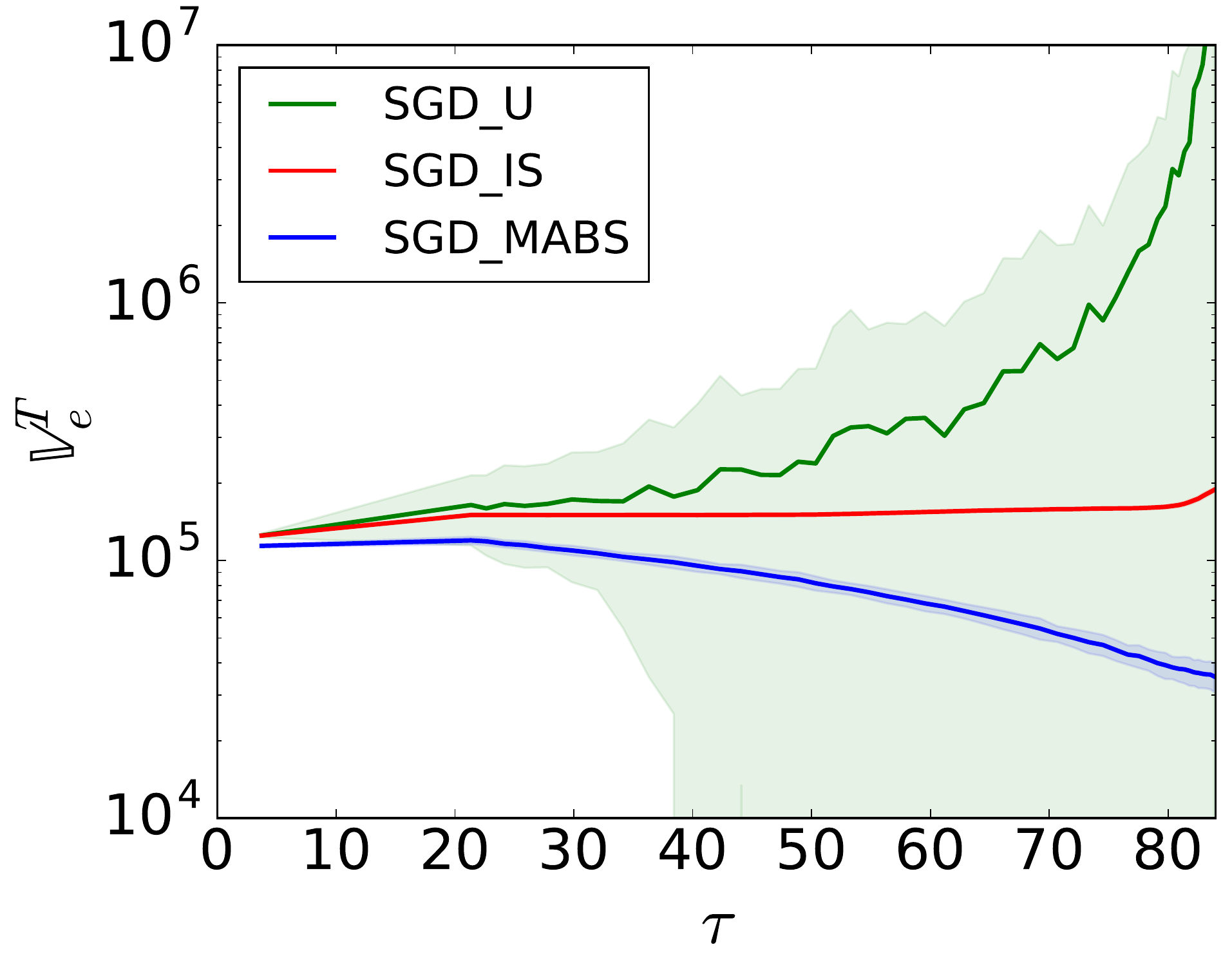}}
		\vspace{-.2in}	
		
	\caption{We study SGD with different sampling methods by comparing the convergence 
and the effective variance as a function of $\tau$ (a measure of the dissimilarity of the $\nabla \phi_i$s). We observe that both are lowest when MABS is used, and this effect increases significantly in $\tau$. \vspace{-.2in}}
\label{fig:SGD_main1}
\end{figure}

\vspace{-.1in}
\subsection{Empirical Results on Synthetic Data}
\vspace{-.1in}
As discussed in Section~\ref{sec:Introduction}, the benefit of MABS (and of non-uniform sampling more generally) will depend on how \emph{similar} the $\nabla \phi_i$s are. 
Let $L_i$ be the smoothness parameter of the sub-cost functions $\phi_i$, let $L_m = \max_i\{L_i\}$ be the maximum-smoothness, $\bar{L}= \sum_{i=1}^{n}\nicefrac{L_i}{n}$ be the average-smoothness, and $\tau = \nicefrac{L_m}{\bar{L}}$ be their ratio.
As observed in \cite{KG2016,ZZ2014}, when $\tau$ is large, we expect non-uniform sampling (and in particular MABS) to be more advantageous. 
To study this effect, we present results on synthetic datasets with different $\tau$ using  SGD, SGD\_IS, and SGD\_MABS.\footnote{In SGD\_IS, the sampling distribution is $p_i = L_i/(\sum_{j=1}^{n}L_j)$ (see \cite{ZZ2014}).} 

\vspace{-.1in}
\paragraph{Dataset.}
The datasets have $n=101$ datapoints and $d=5$ features.\footnote{Similar results are obtained for different values of $n$ and $d$.}
The labels are defined to be $y_i \triangleq \langle x_i,\beta\rangle + N_i$, where $\beta \in \mathbb{R}^5$ is the coefficient of the hyperplane generated from a Gaussian distribution with mean 0 and standard deviation 10, and $N_i$ is Gaussian noise with mean 0 and variance 1. 
The features $x_i \in \mathbb{R}^5$ are generated from a Gaussian distribution whose mean and variance are generated randomly.
 In order to obtain different $\tau$, we choose the datapoint $i$ with the largest smoothness $L_i=L_m$ and multiply its entire feature vector $x_i$ by a number $c>1$, whereas all labels and all other features remain fixed. 
 This increases $L_m$, and hence $\tau$.
The sub-cost function used here is $\phi_i(w) = \nicefrac{(\langle x_i,w\rangle-y_i)^2}{2}$, i.e., ridge regression with $\lambda = 0$.
All the algorithms use the same step size $\gamma=4\cdot10^{-3}$. 
Each experiment is run for $T = 3000$ iterations and repeated $k = 200$ times. 
We report the effective variance $\mathbb{V}^T_e(w^T)$ at iteration $T$, and the difference of values $F(w^T)$ found by three sampling versions of SGD and the value $F^\star$ found by gradient descent, to compare the stochastic algorithms (SGD, SGD\_IS, and SGD\_MABS) to the ideal gradient descent.

\vspace{-.1in}
\paragraph{Results.}
In Figure~\ref{fig:delta_F}, we observe that MABS has the best performance of all three sampling methods as the value of $F(w^T)$ for SGD\_MABS is the closest to $F^\star$ for all $\tau$. 
Additionally, as $\tau$ increases, the performance of SGD\_MABS further improves, confirming the intuition that when there is a datapoint with large gradient the convergence of MABS to the optimal sampling distribution is faster.
 On the other hand, as expected, the performance of SGD\_U degrades significantly in $\tau$. 
As SGD\_IS does not appear to be affected by $\tau$, the advantage of MABS over IS is strongest for large $\tau$. 
 Figure~\ref{fig:variance_tau} depicts the effective variance $\mathbb{V}^T_e(w^T)$ at final iteration $T$ as a function of $\tau$, and similar observations can be made. In particular, the effective variance of SGD\_MABS is lowest, and is decreasing in $\tau$ while the effective variance of SGD\_IS and SGD\_U are non-decreasing and increasing in $\tau$ respectively.

\vspace{-.1in}
\subsection{Empirical Results on Real-world Data}
\vspace{-.1in}
\begin{table}
	\caption{Statistics of the datasets used in the experiments.}
	\centering
\begin{tabular}{c | c c c}
	Dataset & $n$ & $d$ & $\tau$\\
	\hline
	synthetic & 101 & 5 & 3.7-83.9 \\
	ijcnn1 & 49990 & 22 & 2.61 \\ 
	w8a & 49749 & 300 & 9.79\\
\end{tabular}
	\label{table:stat}
	\vspace{-.25in}
\end{table}
We consider two classification datasets, w8a and ijcnn1 from \cite{CL2011}, each of which has two classes. 
For each of SGD, Prox-SVRG and SAGA, we compare the effect of different sampling methods. 
We report the value $F(w^t)$, reached by the three sampling versions of stochastic optimization algorithms above, as a function of number of iterations $t$.

The cost function $F(w)$ used here is $L_1$-penalized logistic regression  with $\lambda = 10^{-4}$, i.e., $\phi_i(w) = \log (1+\exp(-y_i\langle x_i,w\rangle))$ and $r(w) = \|w\|_1$.
Each experiment is run for $T = 30n$ iterations and repeated $k = 100$ times. 
In all experiments, the step sizes $\gamma$ are 1, except the experiments for Prox\_SVRG, for which larger step size 2 is used.
The results are depicted in Figure~\ref{fig:main}. 
Again, the stochastic optimization algorithms with MABS are consistently the best among the algorithms. Comparing the results for the datasets \textit{ijcnn1} (with $\tau=2.61$) and \textit{w8a} (with $\tau=9.79$), MABS is more helpful for \textit{w8a}. This confirms the intuition that MABS improves the convergence rate more for a dataset with larger $\tau$ (see Figure~\ref{fig:saga_w8a} and \ref{fig:saga_ijcnn1}). In Figure~\ref{fig:Prox_ijcnn1} and \ref{fig:Prox_w8a}, the results for different sampling methods are similar to each other, this might be due to of the fact that Prox\_SVRG has a variance reduction technique \cite{AY2016} which is more efficient here than non-uniform sampling technique. Whereas, in Figure~\ref{fig:saga_w8a} MABS is still efficient in improving the convergence rate for SAGA, that has a varaince reduction technique \cite{DBL2014}. 
We also tested MABS in conjunction with S2GD and  Quasi\_Newton (with step size 0.0001).
For S2GD we use the algorithm from \cite{KR2013} with step size 1. S2GD\_MABS is 10 times closer to the optimal value than S2GD with uniform sampling.
For Quasi\_Newton we use the algorithm from \cite{BHNS2016} with step size 0.0001 and $M=200$. Quasi\_Newton\_MABS is 13.6 closer to the optimal value than Quasi\_Newton.

\begin{figure}[t] 
	\centering
	\subcaptionbox{ SGD on w8a dataset.}[.3\linewidth][c]{%
		\includegraphics[width=.3\linewidth]{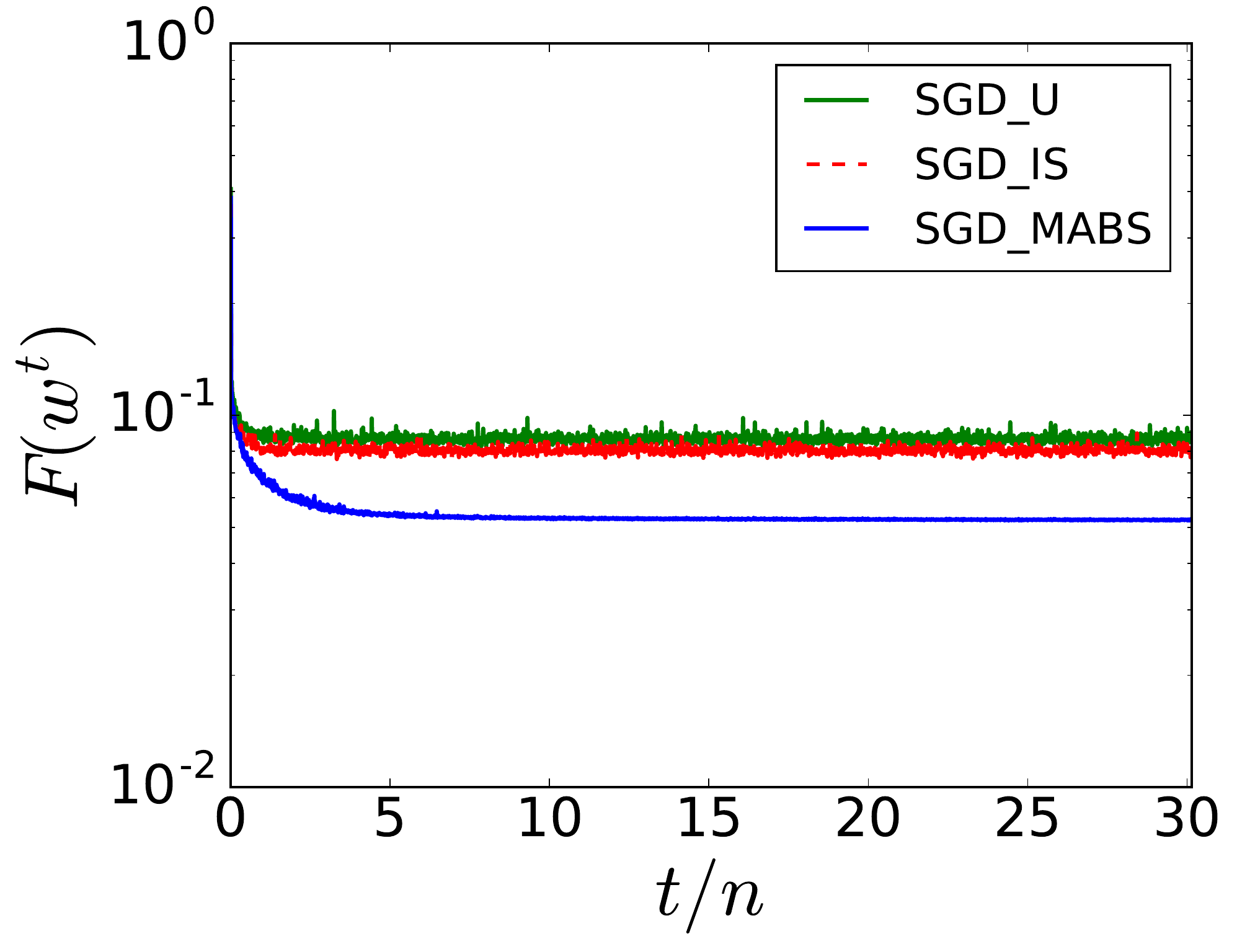}}
	\subcaptionbox{ Prox-SVRG on w8a dataset.\label{fig:Prox_w8a}}[.3\linewidth][c]{%
		\includegraphics[width=.3\linewidth]{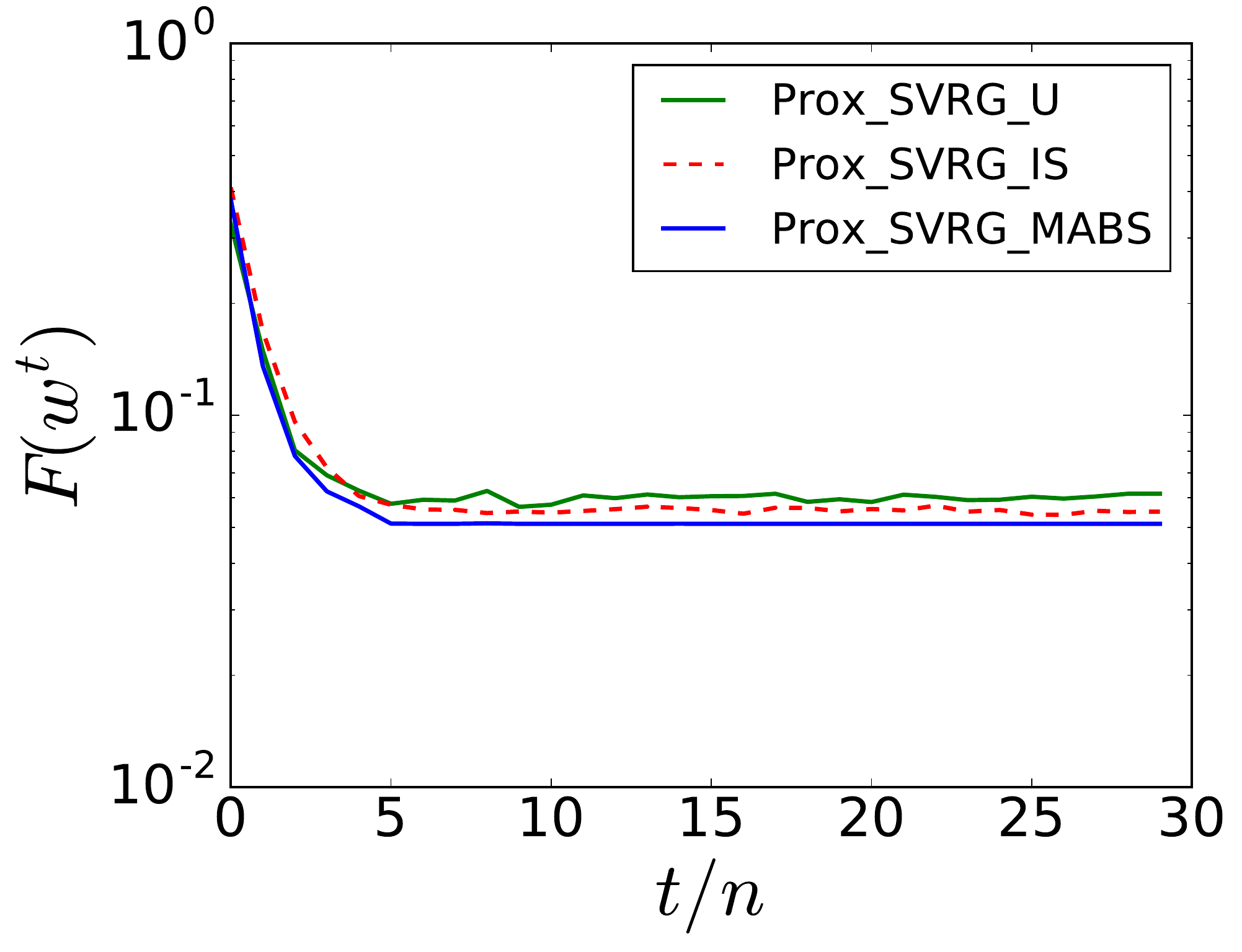}}\quad
	\subcaptionbox{ SAGA on w8a dataset. \label{fig:saga_w8a}}[.3\linewidth][c]{%
		\includegraphics[width=.3\linewidth]{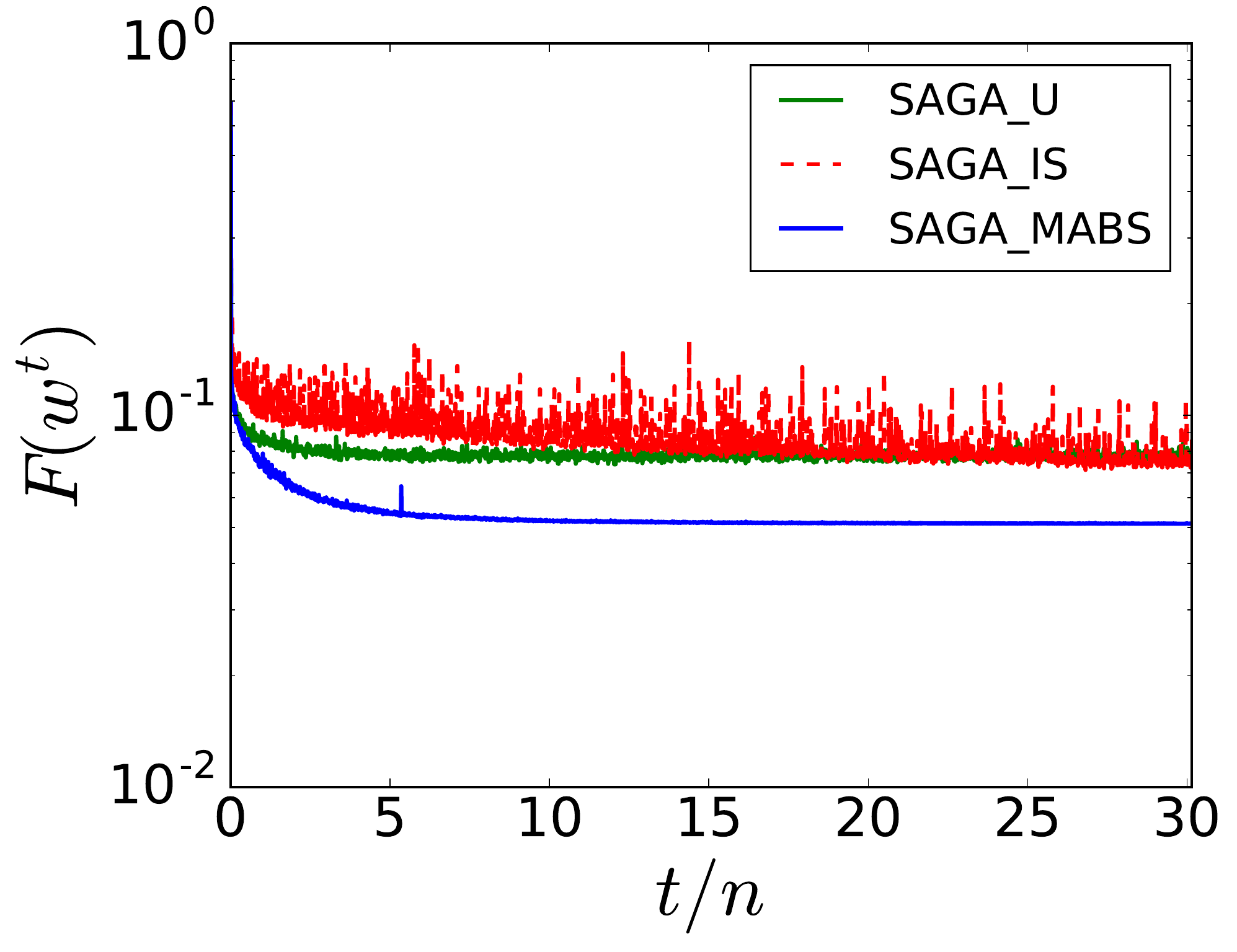}}
	\vspace{-.1in}
	\bigskip
		\subcaptionbox{ SGD on ijcnn1 dataset.}[.3\linewidth][c]{%
		\includegraphics[width=.3\linewidth]{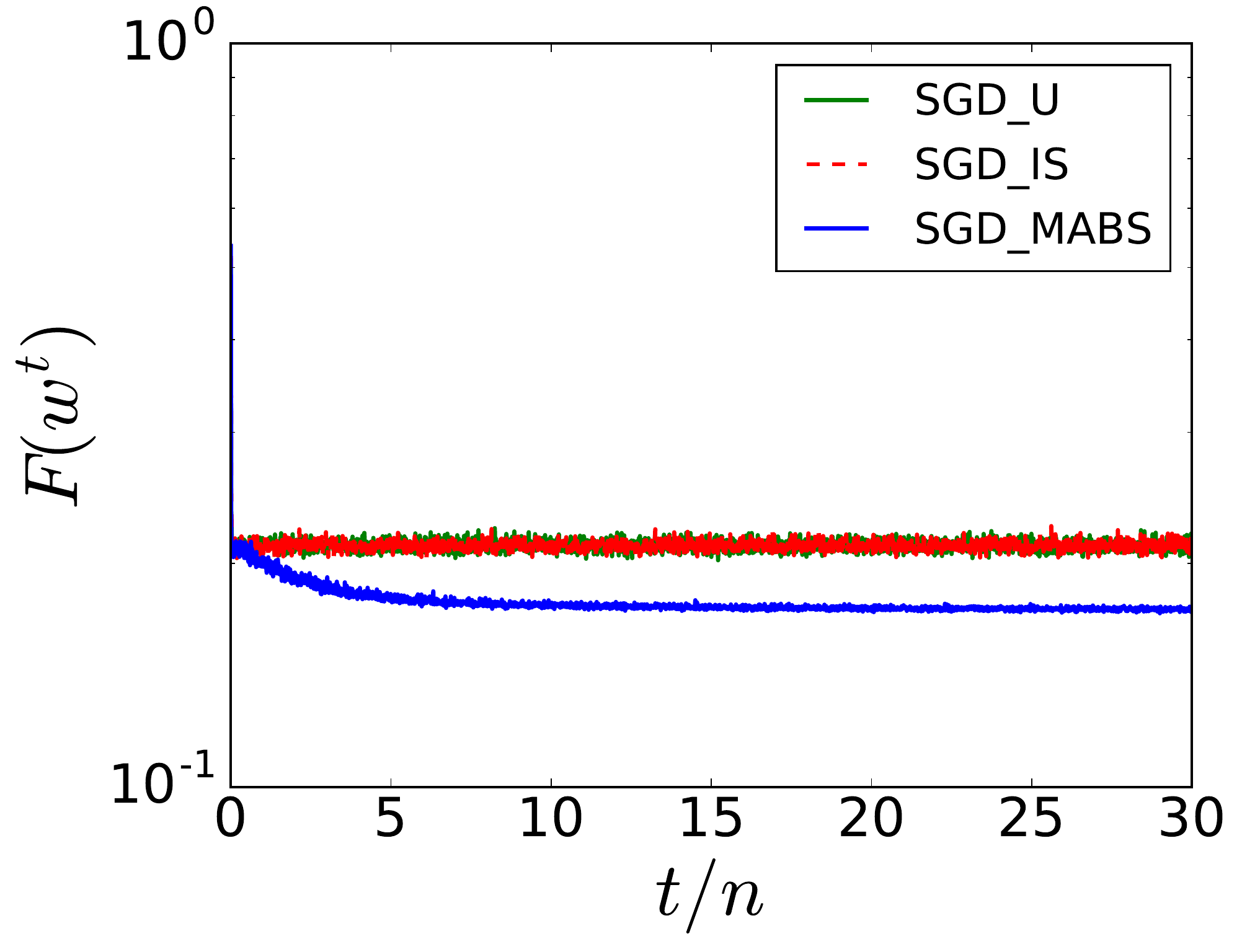}}\quad
		\subcaptionbox{ Prox-SVRG on ijcnn1 dataset.\label{fig:Prox_ijcnn1}}[.3\linewidth][c]{%
			\includegraphics[width=.3\linewidth]{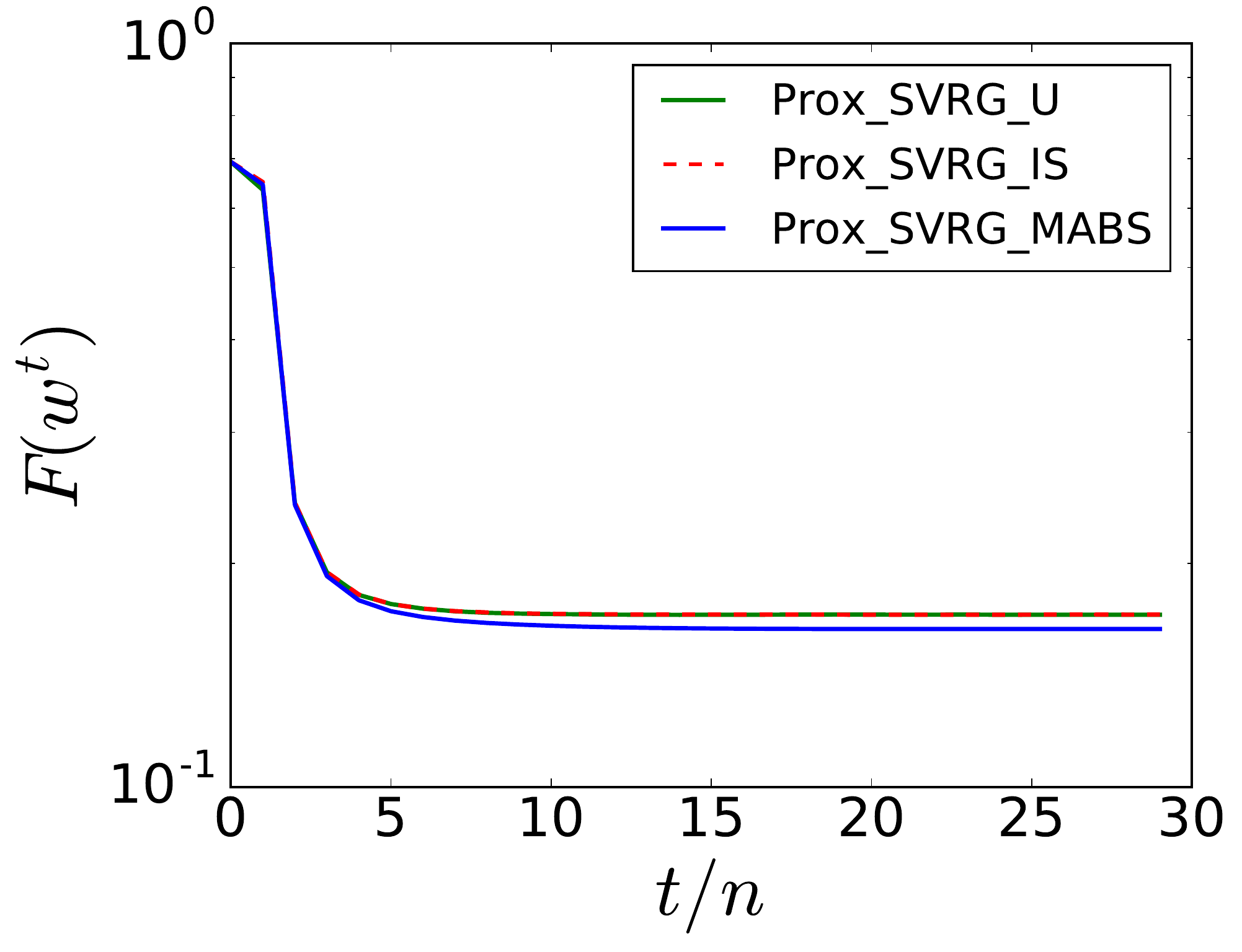}}\quad
	\subcaptionbox{ SAGA on ijcnn1 datset.\label{fig:saga_ijcnn1}}[.3\linewidth][c]{%
		\includegraphics[width=.3\linewidth]{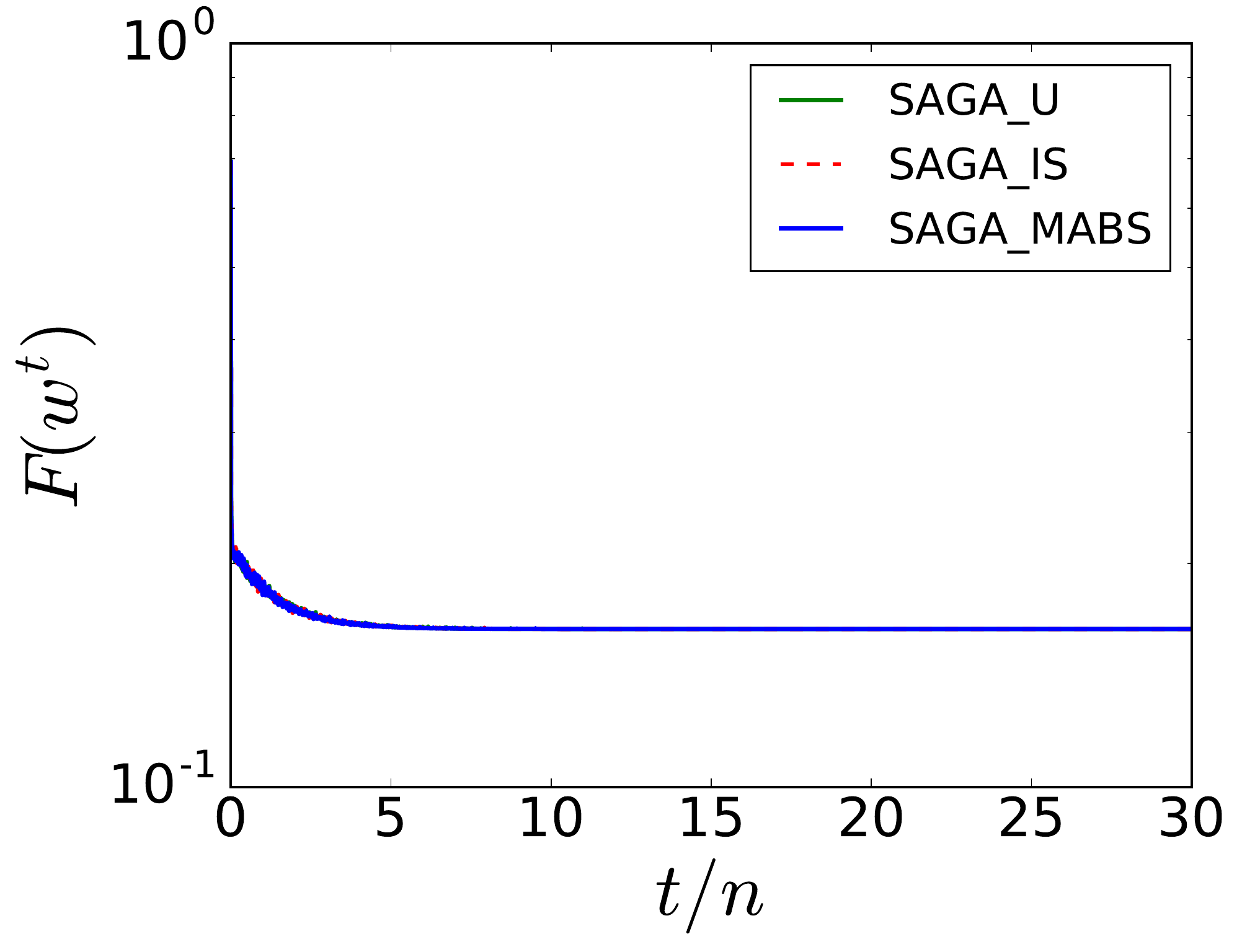}}
		\vspace{-.2in}
	\caption{Comparison of three different stochastic optimization algorithms (SGD, Prox-SVRG and SAGA) on two datasets (w8a and ijcnn1) when using different sampling methods. MABS is never suboptimal and often significantly outperforms the other sampling methods.\vspace{-.1in}}\label{fig:main}
\end{figure}

\vspace{-.1in}
\subsection{Stability} \label{sec:exp_robust}
\vspace{-.1in}
Following the discussion in Section~\ref{sec:applications}, we study the robustness of SGD, Prox-SVRG and SAGA when using  a large step size. 
In particular, we consider the \textit{w8a} dataset 
and $L_1$-penalized logistic regression as above.
We collect the value $F(w^T)$ at final iteration $T=60n$ for different stochastic optimization algorithms in conjunction with different sampling methods and fixed step size $\gamma$. Each experiment is repeated $k=50$ times. 
The results are depicted in Figure~\ref{fig:gamma} and show that MABS is indeed a more robust sampling method; 
SGD\_MABS is able to find the optimal coordinate up to $\gamma=5$ (see Figure~\ref{fig:gamma_SGD_w8a}), whereas SGD and SGD\_IS diverge after $\gamma = 0.5$. In Figure~\ref{fig:gamma_SVRG_w8a}, the difference between three sampling methods is less but still Prox\_SVRG\_MABS outperforms the others. SAGA\_MABS is also more robust than SAGA with other sampling methods, it is able to find the optimal coordinate up to $\gamma=3$ and diverges after that (see Figure~\ref{fig:gamma_SAGA_w8a}).

\begin{figure}[t] 
	\centering
	\subcaptionbox{ SGD on w8a dataset.\label{fig:gamma_SGD_w8a}}[.3\linewidth][c]{%
		\includegraphics[width=.3\linewidth]{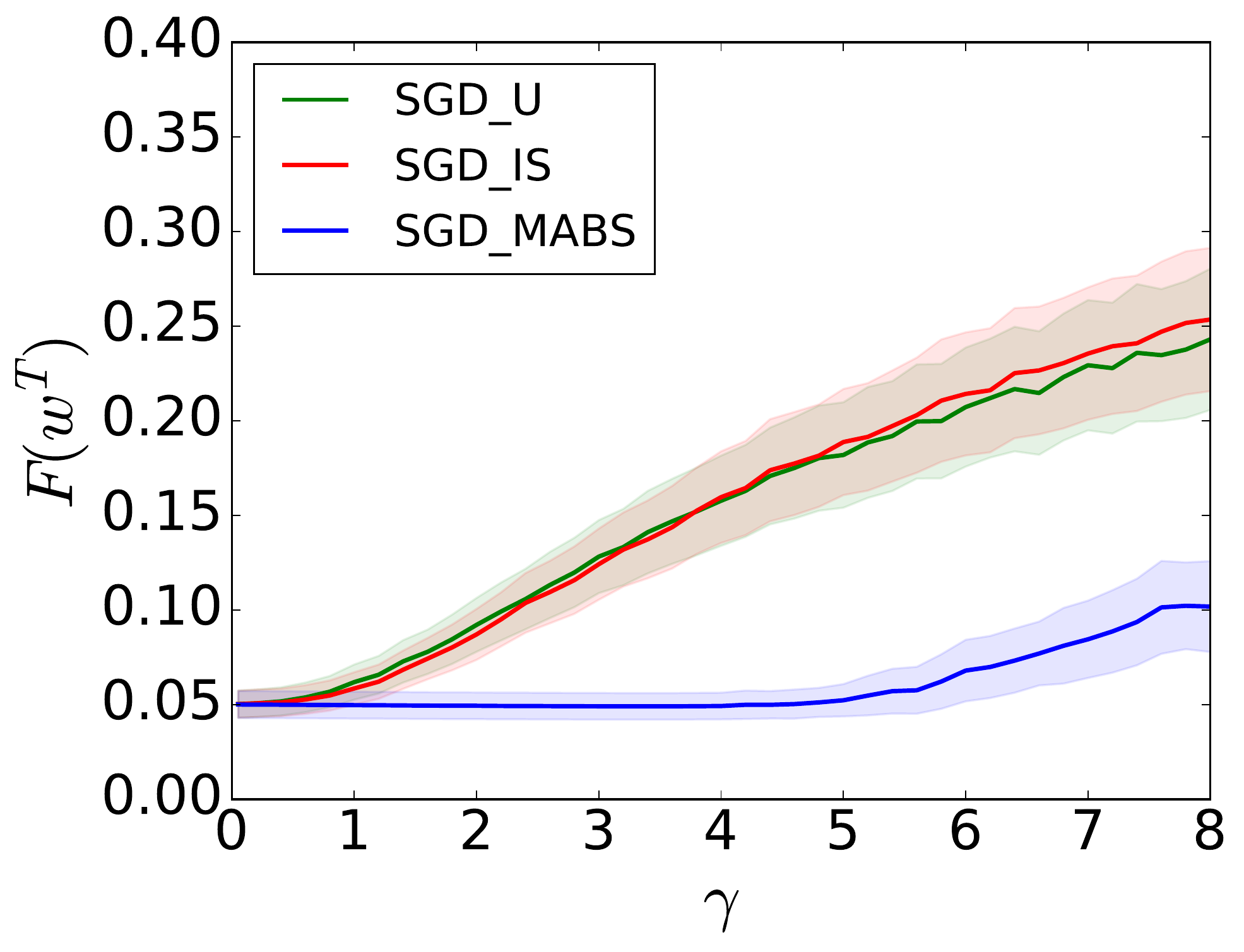}}
	\subcaptionbox{ Prox-SVRG on w8a dataset.\label{fig:gamma_SVRG_w8a}}[.3\linewidth][c]{%
		\includegraphics[width=.3\linewidth]{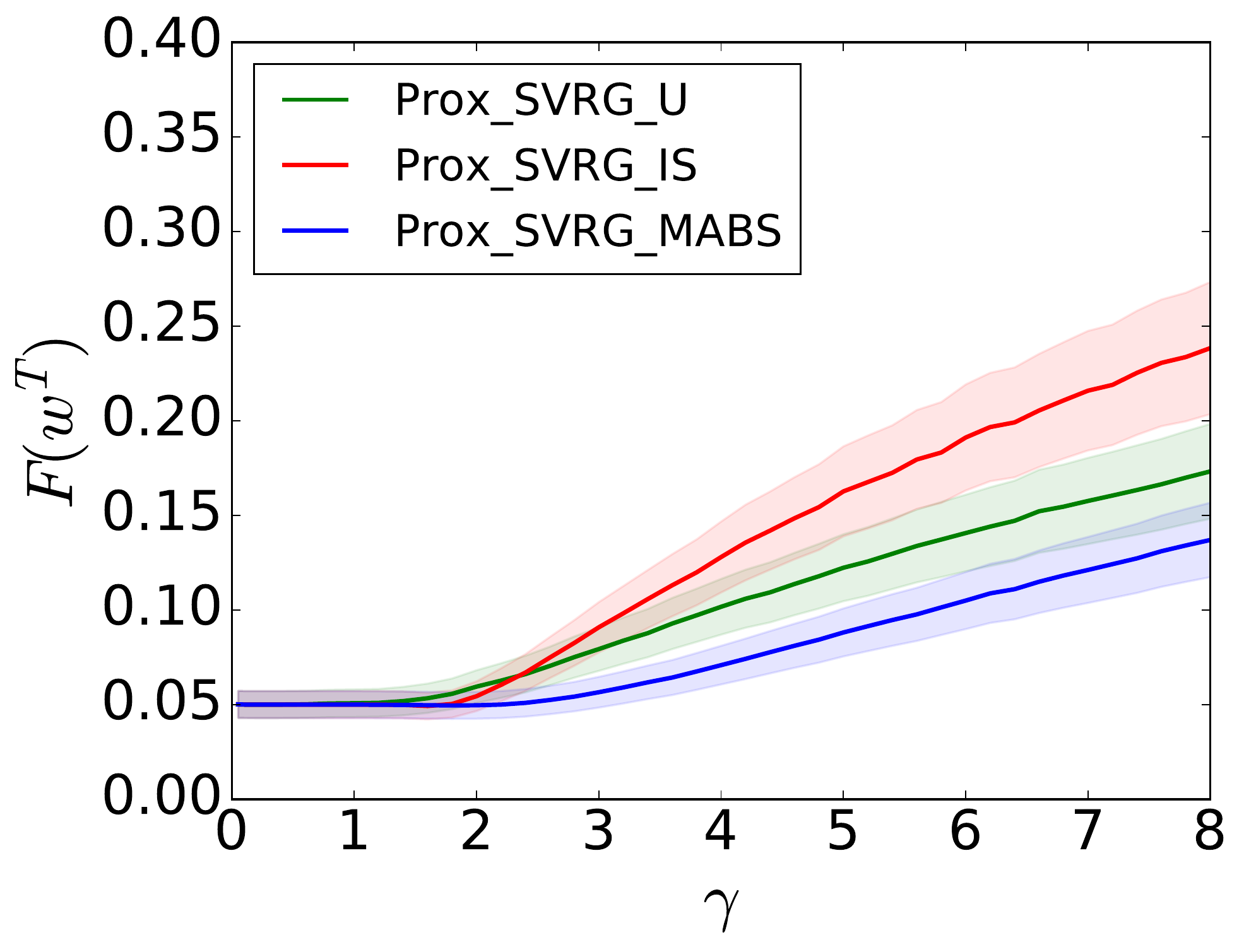}}\quad
	\subcaptionbox{ SAGA on w8a dataset. \label{fig:gamma_SAGA_w8a}}[.3\linewidth][c]{%
		\includegraphics[width=.3\linewidth]{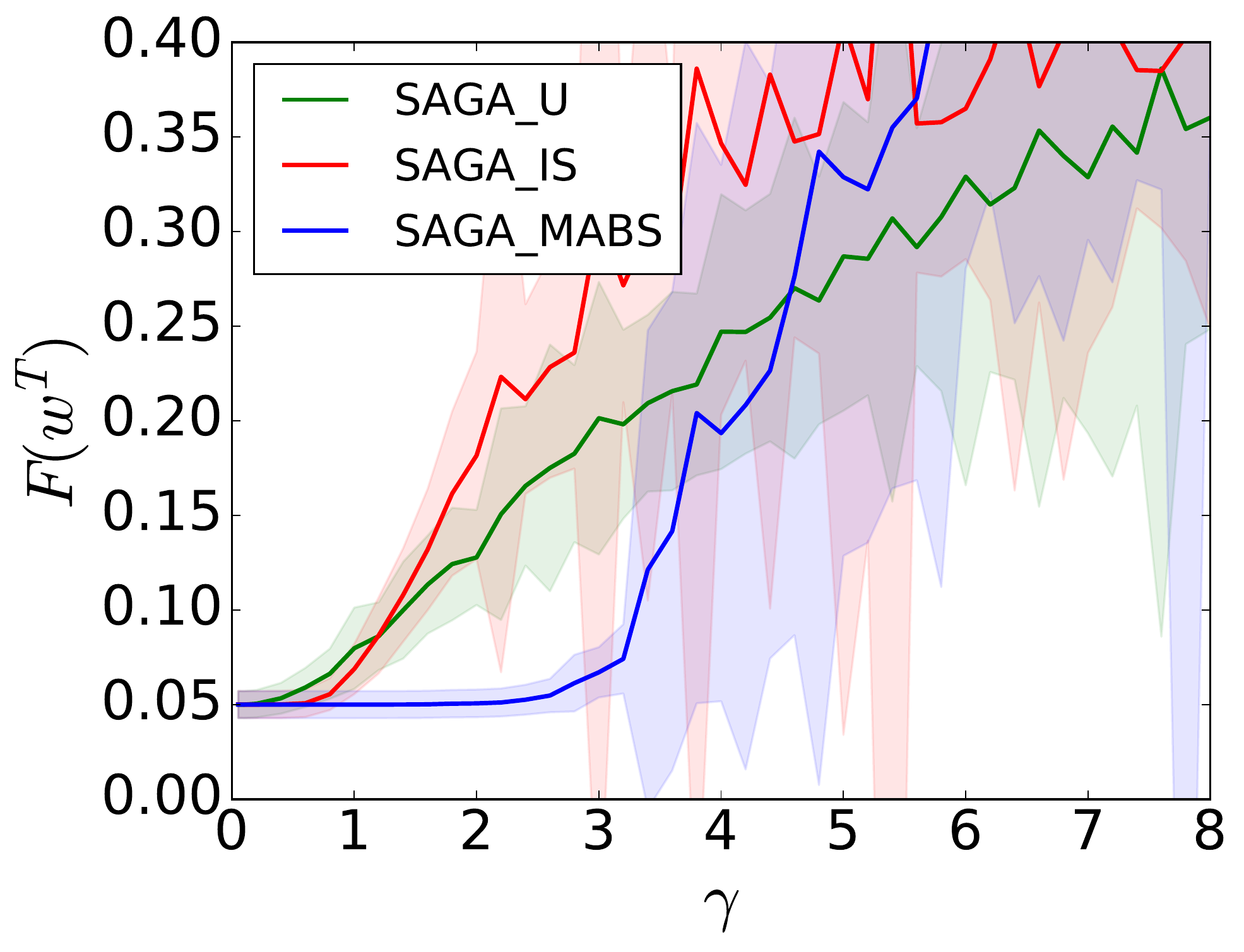}}
	\caption{Comparison of three different stochastic optimization algorithms (SGD, Prox-SVRG and SAGA) on w8a when using different sampling methods and different step sizes $\gamma$. MABS significantly outperforms the other methods and is able to find the optimal value even for a large $\gamma$. \vspace{-.2in}}\label{fig:gamma}
\end{figure}

\vspace{-.1in}
\subsection{Training time}
\vspace{-.1in}
We briefly note that adding MABS does not cost much with respect to training time. For example, given high-dimensional data with $d=4000$ and $n=50000$, empirically, SGD\_MABS uses only 10\%  more clock-time than SGD. In contrast, SGD\_IS with $p\sim G_i$ uses 40\% more clock-time than SGD, and if $p \sim L_i$ is so slow (as calculating $L_i$ is very expensive) that that our simulations did not terminate. 

\vspace{-.1in}
\section{Conclusion and Future Work}
\vspace{-.1in}
In this work, a novel sampling method (called MABS) is presented to reduce the variance of gradient estimation. The method is inspired by multi-armed bandit algorithms (in particular EXP3) and does not require any preprocessing. First, the variance of the unbiased estimator of the gradient at iteration $t$ is defined as a function of the sampling distribution $p^t$ and of the gradients of sub-cost functions $\nabla \phi_{i}(w^t)$. Next, considering the past information, MABS minimizes this cost function by appropriately updating to $p^t$,
and learns the optimal distribution $p^\star$ given the set of selected  datapoints $\{i_t\}_{1\leq t \leq T}$ and gradients $\{\nabla \phi_{i_t}(w^t)\}_{1\leq t \leq T}$. 
It is shown that under a natural assumption (bounded gradients) MABS can asymptotically approximate the optimal variance within a factor of 3. Moreover, MABS combined with three stochastic optimization algorithms  (SGD, Prox\_SVRG, and SAGA) is tested on real data. We observe its effectiveness on variance reduction and the rate of convergence of these optimization algorithms as compared to other sampling approaches. 
Furthermore,  MABS is tested on synthetic datasets, and its effectiveness is observed for a large range of $\tau$ (i.e., the ratio of maximum smoothness to the average smoothness). It is also observed that SGD\_MABS is significantly more stable than SGD with other sampling methods. Several important directions remain open. First, one would like to improve the constants in the bound in Theorem~\ref{thm:main}. Secondly,  although we observe robustness, finding the optimal step size $\gamma$ for Prox\_SVRG and SAGA remains open. Lastly, it could be of interest to extend the work for other stochastic optimization methods, both by providing theoretical guarantees and observing their performance in practice.

\newpage

\bibliographystyle{plain}
\newpage
\bibliography{SPReferences.bib}

\newpage

\appendix
\section{Appendix} \label{sec:appendix}

\textbf{Lemma~\ref{lem:bound}} 
	For any real value constant $\zeta \leq 1$ and any valid distributions $p^t$ and $p^\star$ we have
	\begin{equation}\label{eq:transform_A}
	(1-2\zeta) \mathbb{V}_e^t\left(p^t\right) - (1-\zeta) \mathbb{V}_e^t\left(p^\star\right) \leq \langle p^t - p^\star, \nabla \mathbb{V}_e^t\left(p^t\right) \rangle + \zeta \langle p^\star,\nabla \mathbb{V}_e^t\left(p^t\right) \rangle.
	\end{equation}

\begin{proof}
	
	The function $\mathbb{V}_e^t\left(p\right)$ is convex with respect to $p$, hence for any two  $p^t$ and $p^\star$ we have
	\begin{equation}
	\mathbb{V}_e^t(p^t) - \mathbb{V}_e^t(p^\star) \leq \langle p^t-p^\star, \nabla \mathbb{V}_e^t(p^t)  \rangle.
	\end{equation}
	Multiplying both sides of this inequality by $1-\zeta$, and noting that \eqref{eq:Bandit} yields that $\langle p^t, \nabla \mathbb{V}_e^t(p^t)  \rangle = - \sum_{i=1}^{n}p^t_i \frac{a^t_i}{(p^t_i)^2}= -\mathbb{V}_e^t(p^t)$ concludes the proof.
	\begin{equation*}
	\begin{aligned}
	(1-\zeta) \mathbb{V}_e^t(p^t) - (1-\zeta) \mathbb{V}_e^t(p^\star) &\leq  \langle p^t-p^\star, \nabla \mathbb{V}_e^t(p^t)  \rangle -\zeta \langle p^t-p^\star, \nabla \mathbb{V}_e^t(p^t)  \rangle \\& =
	\langle p^t-p^\star, \nabla \mathbb{V}_e^t(p^t)  \rangle +\zeta \langle p^\star, \nabla \mathbb{V}_e^t(p^t)  \rangle + \zeta \mathbb{V}_e^t(p^t).
	\end{aligned}
	\end{equation*}
\end{proof}

\textbf{Theorem~\ref{thm:main}}
	Let $T \geq 25n\ln n \cdot \max_i(a_i)^2/(4 \overline{(a^2)})$. 
	Using Algorithm~\ref{alg:Bandit} with $\eta = 0.4$ and $\delta=\sqrt{\eta^4\ln n/(Tn^5\overline{(a^2)})}$ to minimize \eqref{eq:Bandit} with respect to $\{p^t\}_{1\leq t \leq T}$, we have
	\begin{equation}\label{eq:performance1_A}
	\sum_{t=1}^{T} \mathbb{V}_e^t(p^t)  \leq 3 \sum_{t=1}^{T} \mathbb{V}_e^t(p^\star) +
	50 \sqrt{n^5T\overline{(a^2)} \ln n} ,
	\end{equation}
	where $a_i \geq \sup_t \{a^t_i\}$ is an upper bound for $a_i^t$, and $\overline{(a^2)} = \sum_{i=1}^{n}a_i^2/n$.

The condition $T \geq 25n\ln n \cdot \max_i(a_i)^2/(4 \overline{(a^2)})$ ensures that $\delta \hat{r}^t_i \leq 1$, which we need in the proof. 
\begin{proof}
	The proof uses same approach as the proofs in the multiplicative-weight update algorithms (see for example \cite{ACFS2002}), where we adapt it by using Lemma~\ref{lem:bound}.
	The proof is based on upper bounding and lower bounding the potential function $W^T$ at final iteration $T$. 
	Let $r_i^t = a^t_i/(p^t_i)^2$ be the reward of datapoint $i$ and $\hat{r}^t_i  =r_i^t*1_{\{I_t=i\}}/p^t_i$ be an unbiased estimator for $r_i^t$. Then, the update rule of weight $\text{w}_i^t$ is $\text{w}_i^{t+1}=\text{w}_i^t\cdot\exp(\delta\hat{r}^t_i)$.
	Therefore, $\text{w}^T_i=\exp\left(\delta\sum_{t=1}^{T}\hat{r}^t_i\right)$  and the potential function $W^T= \sum_{i=1}^{n} \text{w}^T_i \geq  \text{w}^t_j=\exp\left(\delta\sum_{t=1}^{T}\hat{r}^t_j\right)$ for all $1\leq j \leq n$. Knowing that $W^1=\sum_{i=1}^{n} \text{w}^1_i=n$,
	we get the following lower bound for the potential function $W^T$,
	\begin{equation}\label{eq:lower_bound}
	\delta \sum_{t=1}^{T} \hat{r}^t_j - \ln n \leq \ln\frac{W^T}{W^1}.
	\end{equation}
	Now, let us upper bound $W^T$.
	\begin{equation}
	\frac{W^{t+1}}{W^t} = \frac{\sum_{i=1}^{n}\text{w}^{t+1}_i}{W^t} = \frac{\sum_{i=1}^{n}\text{w}^{t}_i \e^{\delta \hat{r}^t_i}}{W^t} = \sum_{i=1}^{n} \left(\frac{p^t_i-\eta/n}{1-\eta}\right) \e^{\delta \hat{r}^t_i}.
	\end{equation}
	Using the inequality $\e^x < 1+x+x^2$ (for $x<1$), we have
	\begin{equation} \label{eq:x_leq}
	\begin{aligned}
	\frac{W^{t+1}}{W^t} \leq \sum_{i=1}^{n} \left(\frac{p^t_i-\eta/n}{1-\eta}\right) \left(1+\delta \hat{r}^t_i+(\delta \hat{r}^t_i)^2\right) &\leq 1+ \frac{\delta}{1-\eta}\sum_{i=1}^{n}p^t_i \hat{r}^t_i + \frac{\delta^2}{1-\eta}\sum_{i=1}^{n}p^t_i (\hat{r}^t_i)^2. 
	\end{aligned}
	\end{equation} 
	Using the inequality $\ln (1+x)\leq x$ which holds for all $x\geq
	0$ we get
	\begin{equation} \label{eq:upper1}
	\ln \frac{W^{t+1}}{W^t} \leq \frac{\delta}{1-\eta} \sum_{i=1}^{n}p^t_i \hat{r}^t_i + \frac{\delta^2}{1-\eta}\sum_{i=1}^{n}p^t_i (\hat{r}^t_i)^2.
	\end{equation}
	If we sum \eqref{eq:upper1} for $1\leq t \leq T$, we get the following telescopic sum 
	\begin{equation} \label{eq:upper_bound1}
	\ln \frac{W^{T}}{W^1} =\sum_{t=1}^{T}\ln \frac{W^{t+1}}{W^t}   \leq \frac{\delta}{1-\eta} \sum_{t=1}^{T}\sum_{i=1}^{n}p^t_i \hat{r}^t_i + \frac{\delta^2}{1-\eta} \sum_{t=1}^{T} \sum_{i=1}^{n}p^t_i (\hat{r}^t_i)^2.
	\end{equation}
	
	Combining the lower bound \eqref{eq:lower_bound} and the upper bound \eqref{eq:upper_bound1}, we get
	
	\begin{equation} \label{eq:difference1}
	\delta\sum_{t=1}^{T} \hat{r}^t_j - \ln n \leq \frac{\delta}{1-\eta} \sum_{t=1}^{T}\sum_{i=1}^{n}p^t_i \hat{r}^t_i + \frac{\delta^2}{1-\eta}\sum_{t=1}^{T}\sum_{i=1}^{n}p^t_i (\hat{r}^t_i)^2.
	\end{equation}
	Given $p^t$ we have $\E[(\hat{r}^t_i)^2]  = (r^t_i)^2/p^t_i$, 
	%
	hence, taking expectation of ~\eqref{eq:difference1} yields that
	\begin{equation}\label{eq:difference2}
	\delta \sum_{t=1}^{T} r^t_j - \ln n \leq \frac{\delta}{1-\eta} \sum_{t=1}^{T}\sum_{i=1}^{n}p^t_i r^t_i + \frac{\delta^2}{1-\eta}\sum_{t=1}^{T}\sum_{i=1}^{n}(r^t_i)^2.
	\end{equation}
	By multiplying \eqref{eq:difference2} by $p^\star_j$ and summing over $j$, we get
	\begin{equation} \label{eq:full_bound}
	\delta\sum_{t=1}^{T} \sum_{j=1}^{n} p^\star_jr^t_j - \ln n \leq \frac{\delta}{1-\eta} \sum_{t=1}^{T}\sum_{i=1}^{n}p^t_i r^t_i + \frac{\delta^2}{1-\eta}\sum_{t=1}^{T}\sum_{i=1}^{n}(r^t_i)^2,
	\end{equation}
	As $r^t_i= a^t_i/(p^t_i)^2=-\nabla_i \mathbb{V}_e^t(p^t)$, we have $\sum_{i=1}^{n} p_ir^t_i = - \langle p, \nabla \mathbb{V}_e^t(p^t) \rangle$ for any distribution $p$, by plugging this in \eqref{eq:full_bound} and rearranging it, we find
	\begin{equation} \label{eq:difference3}
	\sum_{t=1}^{T} \langle p^t-p^\star, \mathbb{V}_e^t(p^t) \rangle + \eta \sum_{t=1}^{T} \langle p^\star, \mathbb{V}_e^t(p^t) \rangle   \leq \frac{1-\eta}{\delta}\ln n +  \delta \sum_{t=1}^{T} \sum_{i=1}^{n}(r^t_i)^2.
	\end{equation}
	Using Lemma~\ref{lem:bound} with $\zeta = \eta$ in \eqref{eq:transform}, we have
	\begin{equation} \label{eq:whicheta}
	(1-2\eta) \sum_{t=1}^{T} \mathbb{V}_e^t(p^t) - (1-\eta) \sum_{t=1}^{T} \mathbb{V}_e^t(p^\star) \leq
	\frac{1-\eta}{\delta}\ln n +  \delta \sum_{t=1}^{T} \sum_{i=1}^{n}(r^t_i)^2,
	\end{equation}
	which yields
	\begin{equation} \label{eq:upper_bound}
	\sum_{t=1}^{T} \mathbb{V}_e^t(p^t)  \leq \frac{1-\eta}{1-2\eta}  \sum_{t=1}^{T} \mathbb{V}_e^t(p^\star) +
	\frac{1-\eta}{\delta (1-2\eta)}\ln n +  \frac{\delta}{1-2\eta} \sum_{t=1}^{T} \sum_{i=1}^{n}(r^t_i)^2.
	\end{equation}
	Note that \eqref{eq:whicheta} gives an upper bound on $\sum_{t=1}^{T} \mathbb{V}_e^t(p^t)$ only if $\eta < 0.5$.
	Finally, we know that $r^t_i \leq n^2a^t_i/\eta^2$. By setting $\eta = 0.4$ and $\delta=\sqrt{\eta^4\ln n/(Tn^5\overline{(a^2)})}$, we conclude the first part of proof 
	\begin{equation}
	\sum_{t=1}^{T} \mathbb{V}_e^t(p^t)  \leq 3 \sum_{t=1}^{T} \mathbb{V}_e^t(p^\star) +
	50 \sqrt{n^5T\overline{(a^2)} \ln n}  .
	\end{equation}
	With a tree structure (similar to the interval tree), we can update $\text{w}_{i_t}$ and sample from $p^t$ in $O(\log n)$. The 
	
\end{proof}

\textbf{Computational complexity of MABS1:} Similar to IS,
MABS1 requires a memory of size $O(n)$ to sore the weights $\text{w}^t_i$. At each iteration $t$, the weight $\text{w}_{i_t}$ is updated. If we want to update all the probabilities $p^t$, then each iteration of MABS1 needs $O(n)$ computations, which is expensive. However, with a tree structure (similar to the interval tree), we can reduce the computational complexity of sampling and updating $\text{w}_{i_t}$ to $O(\log n)$. 

\begin{corollary}\label{thm:main_T}
	Using MABS with $\eta = 0.4$ and $\delta=\nicefrac{1}{c}\sqrt{\nicefrac{\eta^4\ln n}{(Tn^5\overline{(a^2)})}}$, for some $c>1$, to minimize \eqref{eq:Bandit} with respect to $\{p^t\}_{1\leq t \leq T}$, we have
	\begin{equation}\label{eq:performance_T}
	\sum_{t=1}^{T} \mathbb{V}_e^t(p^t)  \leq 3 \sum_{t=1}^{T} \mathbb{V}_e^t(p^\star) +
	(\frac{75}{4c}+\frac{125}{4c}) \sqrt{n^5T\overline{(a^2)} \ln n} ,
	\end{equation}
	where $T \geq 25n\ln n \cdot \nicefrac{\max_i(a_i)^2}{(4c^2 \overline{(a^2)})} $, for some $a_i \geq \sup_t \{a^t_i\}$, and where $\overline{(a^2)} = \sum_{i=1}^{n}\nicefrac{a_i^2}{n}$. The complexity of MABS is $O(T\log n)$.
\end{corollary}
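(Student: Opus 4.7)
The plan is to follow the classical EXP3-style multiplicative-weights argument of \cite{ACFS2002}, with two non-standard twists: (i) translate the bandit inequality, which is naturally phrased in terms of inner products $\langle p^t-p^\star,\nabla \mathbb{V}_e^t(p^t)\rangle$, into a statement about $\mathbb{V}_e^t(p^t)-\mathbb{V}_e^t(p^\star)$ via Lemma~\ref{lem:bound}; and (ii) control the fact that the ``reward'' $r_i^t = a_i^t/(p_i^t)^2 = -\nabla_i\mathbb{V}_e^t(p^t)$ is not bounded a priori, which is what forces the mixing with the uniform distribution (parameter $\eta$) and the lower bound on $T$. I would define $r_i^t$ as above and $\hat r_i^t = r_i^t\,\Indic_{\{I_t=i\}}/p_i^t$ so that $\E[\hat r_i^t\mid p^t]=r_i^t$ and $\E[(\hat r_i^t)^2\mid p^t]=(r_i^t)^2/p_i^t$.

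Next I would sandwich the potential $W^T=\sum_i \text{w}_i^T$. For the lower bound, since $\text{w}_j^T=\exp(\delta\sum_t \hat r_j^t)$, one has $\ln(W^T/W^1)\ge \delta\sum_t \hat r_j^t-\ln n$ for every $j$. For the upper bound, write
\begin{equation*}
\frac{W^{t+1}}{W^t}=\sum_i\frac{p_i^t-\eta/n}{1-\eta}\,\e^{\delta \hat r_i^t}\le 1+\frac{\delta}{1-\eta}\sum_i p_i^t \hat r_i^t+\frac{\delta^2}{1-\eta}\sum_i p_i^t(\hat r_i^t)^2,
\end{equation*}
using $\e^x\le 1+x+x^2$ valid for $x\le 1$; this is precisely where the hypothesis $\delta\hat r_i^t\le 1$ must be checked, and the worst case $r_i^t\le n^2 a_i^t/\eta^2$ (from $p_i^t\ge \eta/n$) combined with the choice of $\delta$ yields the condition $T\ge 25 n\ln n\cdot \max_i(a_i)^2/(4\overline{(a^2)})$. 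Using $\ln(1+x)\le x$ and telescoping gives the upper bound.

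Combining the two bounds, multiplying by $p_j^\star$, summing over $j$, and taking expectation (so $\hat r_i^t\to r_i^t$ in the linear term and $p_i^t(\hat r_i^t)^2$ has expectation $(r_i^t)^2$), I would obtain an inequality of the form
\begin{equation*}
\sum_{t=1}^T\langle p^t-p^\star,\nabla \mathbb{V}_e^t(p^t)\rangle+\eta\sum_{t=1}^T\langle p^\star,\nabla \mathbb{V}_e^t(p^t)\rangle \le \frac{1-\eta}{\delta}\ln n+\delta\sum_{t,i}(r_i^t)^2.
\end{equation*}
Now I invoke Lemma~\ref{lem:bound} with $\zeta=\eta$, which converts the left-hand side exactly into $(1-2\eta)\sum_t\mathbb{V}_e^t(p^t)-(1-\eta)\sum_t\mathbb{V}_e^t(p^\star)$. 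Rearranging requires $1-2\eta>0$, hence $\eta<1/2$; setting $\eta=0.4$ gives the constant $(1-\eta)/(1-2\eta)=3$ appearing in front of $\sum_t\mathbb{V}_e^t(p^\star)$. Finally, bounding $\sum_i(r_i^t)^2\le n^5\overline{(a^2)}/\eta^4$ and optimizing $\delta$ balances the $1/\delta$ and $\delta$ terms, yielding the $\sqrt{n^5 T\overline{(a^2)}\ln n}$ residual with the claimed constant $50$.

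The main obstacle is the unbounded-reward issue: the whole analysis hinges on verifying $\delta \hat r_i^t\le 1$ to use $\e^x\le 1+x+x^2$, and this propagates to both the condition on $T$ and the $1/\eta^4$ factor in the bound on $\sum_i(r_i^t)^2$. A secondary subtlety is that Lemma~\ref{lem:bound} must be applied with the \emph{same} $\zeta$ as the mixing weight $\eta$ in MABS, so that the $\eta\langle p^\star,\nabla \mathbb{V}_e^t\rangle$ term produced by the multiplicative-weights argument cancels against the second term in \eqref{eq:transform}; without this alignment, one cannot close the bound into a statement purely about effective variances. The complexity claim $O(T\log n)$ is independent of the above and follows from maintaining the weights in a segment-tree so that both the sampling step and the single-coordinate update at each iteration cost $O(\log n)$.
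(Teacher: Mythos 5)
Your write-up correctly reconstructs the multiplicative-weights machinery behind Theorem~\ref{thm:main} (the potential sandwich, the $\e^x\le 1+x+x^2$ step requiring $\delta\hat r_i^t\le 1$, the application of Lemma~\ref{lem:bound} with $\zeta=\eta$, and the choice $\eta=0.4$ giving the factor $3$), and this is indeed the route the paper takes: its proof of Corollary~\ref{thm:main_T} consists of re-running \eqref{eq:upper_bound} with the rescaled $\delta$. But you never actually engage with the parameter $c$, which is the entire content of this corollary, so as written you have proved Theorem~\ref{thm:main} rather than the statement at hand. Concretely: (i) you say you ``optimize $\delta$ to balance the $1/\delta$ and $\delta$ terms'' -- the corollary deliberately does \emph{not} use the balancing $\delta$; it uses $\delta$ smaller by a factor $1/c$ precisely so that the constraint $\delta\hat r_i^t\le 1$, with $\hat r_i^t\le n^3 a_i/\eta^3$, becomes $T\ge 25n\ln n\cdot\max_i(a_i)^2/(4c^2\overline{(a^2)})$, i.e.\ weaker by a factor $c^2$; your stated condition on $T$ is missing this $c^2$. (ii) With the unbalanced $\delta$, the two residual terms in \eqref{eq:upper_bound} no longer contribute equally: $\frac{1-\eta}{\delta(1-2\eta)}\ln n$ is multiplied by $c$ and becomes $\frac{75c}{4}\sqrt{n^5T\overline{(a^2)}\ln n}$, while $\frac{\delta}{1-2\eta}\sum_{t,i}(r_i^t)^2$ is divided by $c$ and becomes $\frac{125}{4c}\sqrt{n^5T\overline{(a^2)}\ln n}$, so the constant is $\frac{75c}{4}+\frac{125}{4c}$, not the $50$ you claim. (Note the paper's displayed constant $\frac{75}{4c}+\frac{125}{4c}$ appears to be a typo -- it would yield a bound strictly better than Theorem~\ref{thm:main} under a weaker hypothesis, and the $1/\delta$ term manifestly scales as $c$, not $1/c$; the intended constant is $\frac{75c}{4}+\frac{125}{4c}$, which reduces to $50$ at $c=1$.) To close the gap, keep everything you wrote up to \eqref{eq:upper_bound} and then substitute $\delta=\frac{1}{c}\sqrt{\eta^4\ln n/(Tn^5\overline{(a^2)})}$ explicitly in both the admissibility condition and the two residual terms.
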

\begin{proof}
	Following the same steps as Theorem~\ref{thm:main}, we have \eqref{eq:upper_bound}, where now by choosing a smaller $\delta$ we can decrease the minimum acceptable $T$. Recall that in \eqref{eq:x_leq} we need to have $\delta \hat{r}^t_i \leq 1$. By choosing $\delta=\nicefrac{1}{c}\sqrt{\nicefrac{\eta^4\ln n}{(Tn^5\overline{(a^2)})}}$ we need to have $T \geq 25n\ln n \cdot \nicefrac{\max_i(a_i)^2}{(4c^2 \overline{(a^2)})} $, which allows us to use a time $c^2$ smaller than before.
\end{proof}

\subsection{MABS with IS} \label{sec:MABS_IS}

Now, similar to IS, assume that we can compute the bounds $a_i=\sup_t \{a^t_i\}$ exactly, then we can refine the algorithm and improve the results.
The idea is that,
instead of mixing the distribution $\{\text{w}_i^t/W^t\}_{1\leq i \leq n}$ with a uniform distribution, we mix $\{\text{w}_i^t/W^t\}_{1\leq i \leq n}$ with distribution $\{a_i^{2/5}/(\sum_{j=1}^{n}a_j^{2/5})\}_{1\leq i \leq n}$, i.e.,  $p^t_i \propto (1-\eta)\text{w}^{t+1}_i+\eta a_i^{2/5}$ for all $i$ (see Algorithm \ref{alg:Bandit2}).


\begin{algorithm}[t]
	\caption{MABS2}
	\label{alg:Bandit2}
	\begin{algorithmic}[1]
		\STATE \textbf{input: }  $a_i$, \hspace{6cm} for all $1\leq i \leq n$
		\STATE \textbf{initialize: }  $\eta = 0.4$ and $\delta=\sqrt{T\eta^4\ln n /(n \overline{(a^{2/5})})^5}$
		\STATE \textbf{initialize: }  $q_i = |a_i|^{2/5}/(\sum_{j=1}^{n}|a_j|^{2/5})$, $\text{w}_i^1=1$, \hspace{0.5cm} for all $1\leq i \leq n$
		\STATE \textbf{initialize: }  $p_i^1 = (1-\eta)\cdot 1/n+\eta q_i$, \hspace{2.2cm} for all $1\leq i \leq n$
		
		\For{$t=1:T$}{
			sample $i\sim p^t$ \\
			$\text{w}^{t+1}_i = \text{w}^t_i * \exp(\frac{\delta a_i^t}{(p^t_i)^3})$ \\
			$\text{w}^{t+1}_j = \text{w}^t_j$, \hspace{5.8cm} for all $ j \neq i$\\
			$W^{t+1} = \sum_{j=1}^{n}\text{w}^{t+1}_j$ \\
			$p^{t+1}_j \leftarrow (1-\eta)\frac{\text{w}^{t+1}_j}{W^{t+1}}+\eta q_j$, \hspace{3.4cm} for all $1\leq j \leq n$ \\
		}
		
	\end{algorithmic}
\end{algorithm}
\begin{corollary}\label{thm:main2}
	Let $T\geq 25n\ln n \cdot \overline{(a^{2/5})}/(4 \cdot \min_i a_i^{2/5})$. 
	Using Algorithm~\ref{alg:Bandit2} with $\eta = 0.4$ and $\delta=\sqrt{T\eta^4\ln n /(n \overline{(a^{2/5})})^5}$ to minimize \eqref{eq:Bandit} with respect to $\{p^t\}_{1\leq t \leq T}$, we have
	
	\begin{equation} \label{eq:performance}
	\sum_{t=1}^{T} \mathbb{V}_e^t(p^t)  \leq 3 \sum_{t=1}^{T} \mathbb{V}_e^t(p^\star) +
	50 \sqrt{n^5T (\overline{a^{2/5}})^5 \ln n} ,
	\end{equation}
	where
	$a_i \geq \sup_t \{a^t_i\}$ is an upper bound for $a_i^t$, and $\overline{(a^{2/5})} = \sum_{i=1}^{n}a_i^{2/5}/n$.
\end{corollary}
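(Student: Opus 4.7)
The plan is to mirror the multiplicative-weights potential argument of Theorem~\ref{thm:main} essentially verbatim, modifying only the two places where the uniform floor $\eta/n$ is replaced by the non-uniform floor $\eta q_i$ with $q_i = a_i^{2/5}/\sum_{j=1}^{n} a_j^{2/5}$. As before, set $r_i^t = a_i^t/(p_i^t)^2 = -\nabla_i \mathbb{V}_e^t(p^t)$, the unbiased estimator $\hat r_i^t = r_i^t \mathbf{1}_{\{I_t = i\}}/p_i^t$, and the potential $W^t = \sum_{i=1}^n \text{w}_i^t$. Because $\text{w}_i^T = \exp\bigl(\delta \sum_{t=1}^T \hat r_i^t\bigr)$ and $W^1 = n$, the lower bound $\ln(W^T/W^1) \geq \delta \sum_{t=1}^T \hat r_j^t - \ln n$ carries over unchanged for every arm $j$.

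For the upper bound I substitute $\text{w}_i^t/W^t = (p_i^t - \eta q_i)/(1-\eta)$ into the ratio $W^{t+1}/W^t$ and use $\mathrm{e}^x \leq 1 + x + x^2$ (valid whenever $\delta \hat r_i^t \leq 1$) to obtain $\ln(W^{t+1}/W^t) \leq (1-\eta)^{-1}\bigl(\delta \sum_i p_i^t \hat r_i^t + \delta^2 \sum_i p_i^t (\hat r_i^t)^2\bigr)$. Telescoping in $t$, taking expectations (using $\mathbb{E}[p_i^t(\hat r_i^t)^2] = (r_i^t)^2$), multiplying by $p_j^\star$ and summing in $j$, then applying Lemma~\ref{lem:bound} with $\zeta = \eta$, I will arrive at
\begin{equation*}
\sum_{t=1}^T \mathbb{V}_e^t(p^t) \leq \frac{1-\eta}{1-2\eta}\sum_{t=1}^T \mathbb{V}_e^t(p^\star) + \frac{(1-\eta)\ln n}{\delta(1-2\eta)} + \frac{\delta}{1-2\eta}\sum_{t=1}^T \sum_{i=1}^n (r_i^t)^2,
\end{equation*}
which is the exact analogue of inequality~\eqref{eq:upper_bound} in the proof of Theorem~\ref{thm:main}.

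The only calculation that genuinely changes is the bound on $\sum_i (r_i^t)^2$. Using $p_i^t \geq \eta q_i$ I get $r_i^t \leq a_i/(\eta q_i)^2 = a_i^{1/5}(n\overline{a^{2/5}})^2/\eta^2$, hence $(r_i^t)^2 \leq a_i^{2/5}(n\overline{a^{2/5}})^4/\eta^4$, and summing in $i$ yields the improved uniform-in-$t$ bound $\sum_{i=1}^n (r_i^t)^2 \leq (n\overline{a^{2/5}})^5/\eta^4$. This replaces the $n^5\overline{a^2}/\eta^4$ bound from Theorem~\ref{thm:main} and is tighter by the power-mean inequality. Plugging $\eta = 0.4$ and the prescribed $\delta$ balances the $1/\delta$ and $\delta$ terms and produces the claimed remainder $50\sqrt{n^5 T (\overline{a^{2/5}})^5 \ln n}$ with leading constant $(1-\eta)/(1-2\eta) = 3$.

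The main technical point, which pins down the threshold on $T$, is certifying $\delta \hat r_i^t \leq 1$ so that $\mathrm{e}^x \leq 1+x+x^2$ applies. Since $\hat r_i^t \leq a_i/(\eta q_i)^3 = a_i^{-1/5}(n\overline{a^{2/5}})^3/\eta^3$, the worst case sits at the arm with the smallest $a_i$, and substituting the prescribed $\delta$ and $\eta = 0.4$ into $\delta \hat r_i^t \leq 1$ reproduces exactly the condition $T \geq 25 n \ln n \cdot \overline{a^{2/5}}/(4 \min_i a_i^{2/5})$ stated in the corollary. This is also the only step that requires the a priori knowledge of $a_i = \sup_t a_i^t$, i.e., the extra assumption that distinguishes MABS2 from MABS.
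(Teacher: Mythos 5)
Your proposal is correct and follows essentially the same route as the paper: reuse the potential-function argument of Theorem~\ref{thm:main} up to inequality~\eqref{eq:upper_bound}, then exploit the non-uniform floor $p_i^t \geq \eta q_i$ with $q_i \propto a_i^{2/5}$ to sharpen the bound $\sum_{i=1}^n (r_i^t)^2 \leq \eta^{-4}\bigl(\sum_{i=1}^n a_i^{2/5}\bigr)^5$ before plugging in $\eta$ and $\delta$. You additionally make explicit the derivation of the threshold on $T$ from the requirement $\delta \hat r_i^t \leq 1$ at the arm with smallest $a_i$, which the paper leaves implicit; the substance is identical.
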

\begin{proof}
	Following the same steps as Theorem~\ref{thm:main}, we have \eqref{eq:upper_bound} where now, by knowing $a_i$ we can minimize the upper bound of $\sum_{i=1}^{n}(r^t_i)^2$ in it.
	\begin{equation} \label{eq:new_bound}
	\sum_{i=1}^{n}(r^t_i)^2 = \sum_{i=1}^{n}\frac{(a^t_i)^2}{(p^t_i)^4} \leq \frac{1}{\eta^4}\sum_{i=1}^{n}\frac{(a_i)^2}{(q_i)^4}.
	\end{equation}
	The right-hand side of \eqref{eq:new_bound} reaches its minimum for $q_i = a_i^{2/5}/\sum_{j=1}^{n}a_j^{2/5}$, and it is
	\begin{equation}
	\sum_{i=1}^{n}(r^t_i)^2  \leq \frac{1}{\eta^4}(\sum_{i=1}^{n}(a_i)^{2/5})^5.
	\end{equation}
	Plugging this bound in \eqref{eq:upper_bound} with $\eta=0.4$ and $\delta = \sqrt{T\eta^4\ln n /(n \overline{a^{0.4}})^5  }$ concludes the proof.
\end{proof}

In the same line of reasoning as in Section~\ref{sec:Introduction}, MABS2 can reduce the second term of the right-hand side of \eqref{eq:performance1} by $n^2$ in extreme cases (that happens when one of the $a_i$ is very large compared to the rest), but this requires computing $a_i$, which can be expensive and inefficient. 

\begin{remark} \label{rmk:adaptive_p}
	Note that the above results are derived for the case where we want to find an approximation of the optimal solution with fixed optimal $p^\star$, i.e., $\min_{p} \sum_{t=1}^{T} \mathbb{V}_e^t(p)$. However, with some additional assumptions, we can improve the results because we can perform close to the optimal solution with optimal $(p^{t})^\star$ for each iteration $t$, i.e., $\min_{p^t} \sum_{t=1}^{T} \mathbb{V}_e^t(p^t)$. These assumptions are bounds on the variation of $a_i^t$ over $t$. The new algorithm parallels Algorithm~\ref{alg:Bandit} with a resetting phase, where we reset the weights $\text{w}_i$ after some number of iterations. More precisely, the time $T$ is divided into bins. In the beginning of each bin, we reset $\text{w}_i$, then we run Algorithm~\ref{alg:Bandit}. The size of each bin is chosen such that the variation of $a_i^t$ across that bin is not large.
	Hence, for each bin we know that $\min_{p^t} \sum_{t=1}^{T} \mathbb{V}_e^t(p^t)$ is close to $\min_{p} \sum_{t=1}^{T} \mathbb{V}_e^t(p)$, 
	and, that by having an algorithm that performs close to $\min_{p} \sum_{t=1}^{T} \mathbb{V}_e^t(p)$ we can deduce that it also performs close to $\min_{p^t} \sum_{t=1}^{T} \mathbb{V}_e^t(p^t)$ (see \cite{BGZ2014} for more details).
\end{remark}

\textbf{Comparing MABS2 with IS}
First let us upper bound the effective-variance of IS for the general form \eqref{eq:Bandit},
\begin{equation} \label{eq:upper_bound_IS}
\sum_{t=1}^{T}\mathbb{V}^t_e(p^t) = \sum_{t=1}^{T}\sum_{i=1}^{n} \frac{a^t_i}{p_i^t} \leq \sum_{t=1}^{T}\sum_{i=1}^{n} \frac{a_i}{p_i^t}. 
\end{equation}
The right-hand side of \eqref{eq:upper_bound_IS} reaches its minimum for
 $p^t_i=p_i=\sqrt{a_i}/\left(\sum_{j=1}^{n}\sqrt{a_j}\right)$, 
\begin{equation} \label{eq:effective_IS}
\sum_{t=1}^{T}\mathbb{V}^t_e(p) \leq  T \left(\sum_{i=1}^{n}\sqrt{a_i}\right)^2.
\end{equation}
Now we have two upper bounds on the effective variance (\eqref{eq:performance} for MABS2 and \eqref{eq:effective_IS} for IS) that we want to compare.
According to \eqref{eq:optimal_p}, we know that $\sum_{t=1}^{T} \mathbb{V}_e^t(p^\star)$ is the optimal effective variance. Therefore, when we compare the effective variance of MABS2 with IS, we focus on the second term of \eqref{eq:performance}, i.e., we focus on the following term
\begin{equation} \label{eq:ratio_effective}
O\left(\frac{\sqrt{n^5T (\overline{a^{0.4}})^5 \ln n}}{T (\sum_{i=1}^{n}\sqrt{a_i})^2}\right).
\end{equation}
Similar to the discussion in Section~\ref{sec:Introduction}, we consider two extreme scenarios. 

(i) Let $a_1 = a_i$ for all $1\leq i\leq n$. Then \eqref{eq:ratio_effective} becomes $O\left(\sqrt{ n \ln n/T}\right)$.
(ii) Let $a_1$ be very large compared to others, i.e., $a_i/a_1 \to 0$ for $i>1$. Then \eqref{eq:ratio_effective} becomes 
$O\left(\sqrt{ \ln n/T}\right)$.

We see that the benefit of MABS2 exceeds that of IS when the ratio of number $n$ of datapoints to the number $T$ of iteration is small, and when the bounds $a_i$ on the magnitude of gradients $\|\nabla\phi_i(\cdot)\|$ are greatly varying.


\subsection{Definitions} \label{sec:definition}

\begin{definition}[$\bm{L}$-smooth] \label{def:smooth}
	Let $L>0$. Function $h(\cdot)$ is $L$-smooth if for any $x$ and $y \in \mathbb{R}^d$ 
	\begin{equation} \label{eq:smooth}
	h(y) \leq h(x) + \langle \nabla h(x),y-x \rangle + L\|x-y\|^2 .
	\end{equation}
\end{definition}
\begin{definition}[$\bm{\mu}$-strongly convex] \label{def:strong}
	Let $\mu>0$. Function $h(\cdot)$ is $\mu$-strongly convex if for any $x$ and $y \in \mathbb{R}^d$ 
	\begin{equation}\label{eq:strong}
	h(y) \geq h(x) + \langle \nabla h(x),y-x \rangle + \frac{\mu}{2}\|x-y\|^2 .
	\end{equation}
\end{definition}

\begin{definition}[Bregman divergence]
	Let $w_1$, $w_2\in \mathbb{R}^d$. The Bregman divergence associated with the function $\psi$ is 
	\begin{equation}
	\mathcal{B}_\psi(w_1,w_2) = \psi(w_1) - \psi(w_2) - \langle \nabla \psi(w_2),w_1-w_2\rangle.
	\end{equation}
\end{definition}
\begin{definition}[$\bm{\mu}$-strongly convex with respect to $\bm{\psi}$]
	Let $\mu >0$. Function $f(\cdot)$ is $\mu$-strongly convex with respect to a differentiable function $\psi$ if for any $w_1$ and $w_2 \in \mathbb{R}^d$
	\begin{equation}
	f(w_1) \geq f(w_2) + \langle \nabla \psi(w_2),w_1-w_2\rangle + \mu \mathcal{B}_\psi (w_1,w_2).
	\end{equation}
\end{definition}

\end{document}